\definecolor{Red}{rgb}{1,0,0}
\theoremstyle{plain}
\newtheorem{theorem}{Theorem}[section]
\newtheorem{lemma}{Lemma}
\newtheorem{corollary}[theorem]{Corollary}
\newtheorem{assumption}{Assumption}
\theoremstyle{definition}
\theoremstyle{remark}
\newtheorem{nota}{Remark}
\newcommand{\minunder}[2]{\underset{#1}{\operatorname{min}} ~ {#2}}
\newcommand{\limunder}[2]{\underset{#1}{\operatorname{lim}} ~ {#2}}
\definecolor{colorTabRow}{gray}{0.8}
\date{}
\begin{document}
%

\title{Interactive Image Segmentation From A Feedback Control Perspective}
%
%
%

\author{Liangjia Zhu,
	   Peter Karasev,
       Ivan Kolesov,
       Romeil Sandhu,
       and Allen Tannenbaum,~\IEEEmembership{Fellow,~IEEE}
\thanks{L. Zhu, I. Kolesov, Romeil Sandhu, and A. Tannenbaum are with the Department
of Computer Science and Applied Mathematics/Statistics, Stony Brook University: (emails: \{liangjia.zhu, ivan.kolesov, romeil.sandhu, allen.tannenbaum\}@stonybrook.edu)

P. Karasev is now with Exception Orthopaedic Solutions, LLC.}
}

\maketitle

\begin{abstract}
Image segmentation is a fundamental problem in computational vision and medical imaging. Designing a generic, automated method that works for various objects and imaging modalities is a formidable task. Instead of proposing a new specific segmentation algorithm, we present a general design principle on how to integrate user interactions from the perspective of feedback control theory. Impulsive control and Lyapunov stability analysis are employed to design and analyze an interactive segmentation system. Then stabilization conditions are derived to guide algorithm design. Finally, the effectiveness and robustness of proposed method are demonstrated.

\end{abstract}

\begin{IEEEkeywords}
Interactive image segmentation, dynamical system, feedback control, impulsive control, evolutionary process
\end{IEEEkeywords}

\IEEEpeerreviewmaketitle

\section{Introduction}

\IEEEPARstart{T}{he} problem of image segmentation has been an active research field over the past several decades and
remains as a very challenging task.  Although user (human) knowledge can recognize and partition an image into necessary regions,
current automated algorithms fail to capture such boundaries on a consistent basis over wide ranging image modalities, i.e., there exists no ``universal'' segmentation algorithm. This said, the issue of effectively integrating user prior knowledge into a segmentation design is a
driving principle behind existing state-of-the-art methods. These application-driven methodologies utilized prior models to aid in the segmentation process \cite{MedSeg_Shape_Review}. However, these methods remain automated and suffer from the same tacit issues for which they were designed to overcome. As a result, users directly participate in the segmentation loop in various image segmentation systems \cite{SiemensWS, ITKSnap}. Given such input, the research community has classified this area of research as \textit{interactive segmentation} \cite{IntelScissors, GraphCut_ND_IJCV,GrabCut, GrowCut, Interactive_RW_Cardiac}. Typically, one starts by initializing a method and then iteratively modifies the intermediate results until the algorithm obtains a satisfactory result. This process loop can be alternatively viewed as a feedback
control process \cite{Feedback_Textbook}, whereby the user's editing action is a controller, the visualization/monitoring is an observer, and the changes of the intermediate results drive the system dynamics. Once the segmentation is complete, the system converges to an expected result.
In this manner, the user automatically fills the ``information'' gap between an imperfect model to that of a desired segmentation. However, much of the work along this line of research does not explicitly model the user's role from the systems and control perspective. That is, user inputs are used \textit{passively} without consideration to its contribution to the stability or convergence of the overall system. To the best of our knowledge, there are  very few attempts to model segmentation from the perspective of feedback control.

Feedback principles have been used in the literature either based on empirical rules, such as boundary consistency \cite{AdaptiveSeg_SMC}, connectivity of foreground pixels \cite{ControlPatternRecog_IFAC}, or by modeling user contribution as a weighted term in an objective function \cite{Interactive_ProbLevelSet_SPIE, InteractiveLS_ISBI} to \textit{close} the segmentation process. The important property of stability for a control system  was not touched upon in these works. In our previous work \cite{KSlice_TMI}, we formulated an interactive image segmentation methodology as a form of feedback control of a given PDE system and then derived its stability conditions. The method was chosen to be the classical region-based active contours for single object segmentation. One major advantage of this formulation is that although the user does not know a perfect model/method for a segmentation task (which in fact rarely exists), one can start from a classic model and alter the segmentation in a principled manner. We can then quantitatively design and evaluate the performance a control-based segmentation system.

In this paper and with regards to our previous work \cite{KSlice_TMI}, we expand our methodology to generalized cases of evolution-based segmentation methods in addition to supporting multiple-object segmentation. Figure \ref{fig:digram} shows the diagram of the proposed framework. A user
incorporates their prior knowledge to generate corrections as input to the closed-loop system.
The segmentation boundary evolution and explicit estimate of the user's ideal segmentation are
updated within an inner loop.

\begin{figure*}[!htb]
\begin{center}
	\includegraphics[width = 6.5in]{./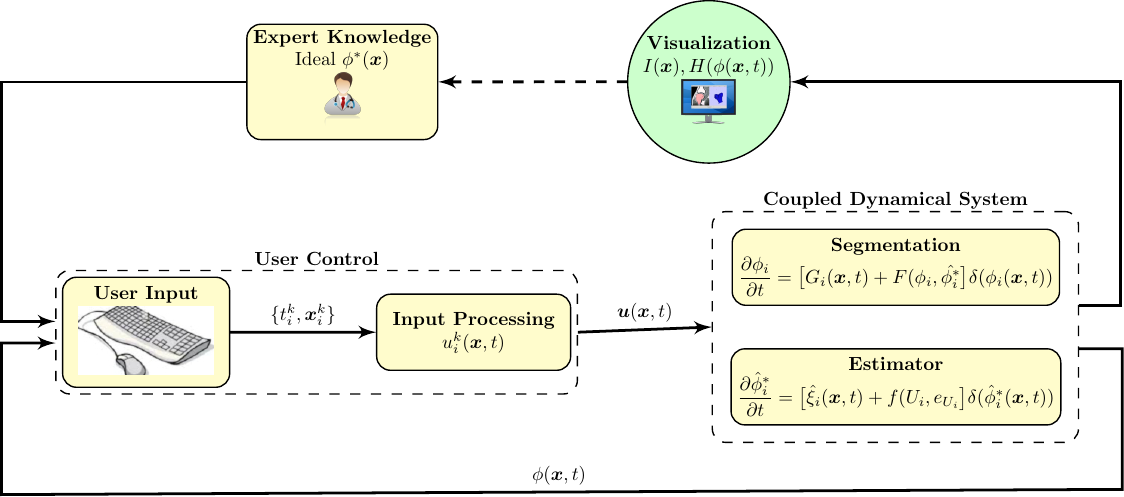}%
\end{center}\vspace{-0.5cm}
\caption{Diagram of the control-based segmentation framework.The feedback compensates for deficiencies in automatic segmentation by utilizing the expert's knowledge.}
\label{fig:digram}
\end{figure*}

The main contributions of this work are:
\begin{itemize}
\item{We present a framework on how to design an interactive segmentation system from a feedback control perspective. Stabilization conditions for an interactive segmentation system are derived yielding a tangible framework for algorithm design and analysis. A key feature is that we are bridging image segmentation with control theory. As such, we are able to leverage new developments to design and analyze more sophisticated image segmentation systems. For example and in this work, impulsive control is adopted to model user input so as to provide a more practical approach than the continuous case derived in \cite{KSlice_TMI}}
\item{We present a relaxed framework with regards to the core algorithm (i.e., the only requirement is the ability to be described as a dynamical system). In short, this can be applied to multi-object segmentation, handles both region- and distance-based metrics, and supports scalar and vector images.}
\item{We present a framework capable of analyzing the behavior of some existing interactive segmentation methods, providing directions for further improvements.}
\end{itemize}
We should note and emphasize that the proposed controlled segmentation framework offers control theoretics to enhance existing methods as opposed to replacing the method itself.

This paper is organized as follows. Section~\ref{sec:Review} reviews automated and interactive segmentation methods as well as basics from feedback control theory. Section~\ref{sec:AutoSeg} gives the formulation of an automated segmentation as a dynamical system for both region- and distance-based metrics. Section~\ref{sec:InterativeSeg} provides the control laws to stabilize the dynamical system for generic cases and shows specific examples on how to realize these control laws. Section~\ref{sec:Results} reports segmentation results followed by conclusion in Section~\ref{sec:Conclusion}.

\section{Literature Review}
\label{sec:Review}
\subsubsection{\textbf{(Semi-)Automatic Image Segmentation}}
~\\
Variational image segmentation or active contour models are one class of algorithms commonly used in automated image segmentation. One underlying assumption employed in these algorithms is that the optimization of an energy functional defined over image features leads to an expected partition of the image \cite{MumfordShah}. Examples of commonly used image features are edges \cite{Geometric_AC_ICCV, GAC_IJCV}, regional statistics \cite{Chan_Vese, CoupledAC_Yezzi}, and their combinations \cite{GeodescRegion_IJCV, Local_Global}. Multi-object segmentation has been addressed either by adding constraints to penalize violating areas \cite{Zhao_MultiMotion,MultiRegionAC_TIP06} or by introducing competing  components for adjacent regions \cite{ACClustering_PAMI06,RSS}. The shortest-distance-based image segmentation methods have been proposed along this line \cite{MinPath_Cohen}. Image classification/clustering can also be modeled by using variational framework \cite{ACClustering_ICIP04}. Comprehensive introduction and review of variational segmentation methods can be found in \cite{GeometricPDE_Book, ActiveContour_Review} and references therein.

Alternative ways of representation, such as based on graph or clustering, have been widely used in image segmentation as well. See \cite{GraphSeg_Review_PR} and \cite{DataClustering_Review}  and references therein for comprehensive reviews.

\subsubsection{\textbf{Interactive Image Segmentation}}
~\\
Automatic segmentation is attractive, but user intervention is inevitable in some critical tasks, especially for medical images \cite{InteractiveMedSeg_Review}. The limitations of automatic or semi-automatic segmentation methods are well known and have been extended to integrate user interactions. Depending on the focus of these methods, the interactive segmentation methods are roughly summarized as follows:

\paragraph{User interface}
Some classic automatic segmentation methods have been combined with powerful visualization and user editing system to achieve interactive segmentation. This is popular in some medical image segmentation systems \cite{SiemensWS, ITKSnap,Interactive_LiveSurface}, where classic methods like region growing, active contours, and graph cuts are adopted as core techniques. Common features  shared by these methods are efficient user interaction, easiness of manipulation, especially for 3D volumes.

\paragraph{ Segmentation accuracy}
Much of the  research focus has placed emphasis on improving the segmentation accuracy per user input. Typically, image segmentation is associated with the optimization of an objective function. Thus, advances have been made either by improving classic objective functions such as active contours \cite{InteractiveLS_ISBI}, graph cuts \cite{Interactive_GeodesicGraphCut}, and geodesic distance \cite{GeodesicSeg_IJCV},  or by proposing new formulations to interpret user input such as conditional random field \cite{Interactive_GeoS} and random walker \cite{Interactive_RW_Cardiac}. One of notable advances in this direction is the reformulation of some variational algorithms, which are subject to local minima,  into convex optimization framework  with global optimal solutions \cite{Interactive_ConvexActiveContour, TVSeg_Cremers,ConvexOpt_PAMI_Cremers}.

\paragraph{User input modeling and accumulation} How to understand the meaning of user input to better equip a segmentation algorithm has recently received attention in the literature. User input has been modeled as non-Euclidean kernels \cite{RBF_Input_Yezzi}, combined with shape constraint \cite{Interactive_GeoStarConnec_CVPR}, and  categorized \cite{UserFriendlyInteraction_TIP} to reduce user efforts in interaction. Since user interaction is not an isolated and passive action, learning-based methods have been proposed to model the temporal/historic knowledge of user input to increase efficiency  \cite{LearningBasedInteractSeg_ACCV, ActiveLearning_Patch_IPMI}.

\paragraph{System design} With regards to system design, recent work has illuminated the \textit{causality} and overall performance of a system. User input is an integral part of the system, which is combined with the current and previous states to determine the next segmentation. Most of these algorithms are rooted either in variational  \cite{KSlice_TMI, Interactive_ImplictFunc, InteractLevelSet_ISBI} or graph-based segmentations \cite{IntelScissors, GraphCut_ND_IJCV, GrabCut, GrowCut, Interactive_DIFT_TMI}.

In addition to these four categories of research, GPU computing has also been increasingly utilized to develop real-time interactive segmentation systems \cite{Interactive_ProbLevelSet_SPIE, TVSeg_Cremers,ConvexOpt_PAMI_Cremers}.

\subsubsection{\textbf{Feedback Control Theory}}
~\\
The principle of feedback is most often used in a control system. The basic idea is to use the difference between the signal to be controlled and a desired reference signal to determine system actions. \textit{We should note that feedback control ought not to be confused with simply taking feedback in a controlled system - there is subtle but distinct difference to unambiguously indicate when the performance is guaranteed} \cite{Feedback_Textbook,JoyOfFeedback}. The second requirement can be considered as studying the stability of a system around a given state or an equilibrium point, referred to as  \textit{stabilization} of a system.

A common technique used in system stabilization is by analyzing the Lyapunov function of a system \cite{NonlinearSystem_Book_Khalil, NonlinearSystem_Book_Marquez, Lyapunov_Discontinous_Clarke}. The basic idea is to check whether the Laypunov function is dissipative along all possible trajectories of a dynamical system around the equilibrium point, rather than solving the system equation directly. It has been applied to stabilize systems driven by single \cite{Global_Stable_Burgers} as well as coupled \cite{Lyapunov_CoupledPDEs} PDEs. Lyapunov analysis has been extended to derive the stability condition for time-delayed system \cite{Lyapunov_Delay}, discrete-time system \cite{Lyapunov_Discrete_SCL}, networks \cite{Lyapunov_Networks}, and to synchronize two chaotic systems with impulsive signals \cite{Impulsive_Chaos_CSF}.

Impulsive control is a rather recent control development derived from the theory of impulsive differential equations \cite{ImpulsiveControl_Book}. It has been widely used in synchronizing dynamical systems due to its effectiveness and efficiency as it requires only small control efforts to stabilize dynamical systems \cite{Impulsive_Chaos_PRL}.

The existence of feedback has long been observed and utilized in modeling the human vision system \cite{ART_Grossberg}. However, only very few attempts have been made to employ the feedback principle to design an image processing system. Early research add feedback in image segmentation either based on empirical rules \cite{AdaptiveSeg_SMC,ControlPatternRecog_IFAC} or by adding user contribution as a weighted term \cite{Interactive_ProbLevelSet_SPIE, InteractiveLS_ISBI}, without addressing the issue of stability for these closed-loop systems. In our previous work \cite{KSlice_TMI}, interactive image segmentation is formulated as controlling region-based active contours, where control law is derived rigorously by using the Laypunov stability theorem.

\begin{figure*}[!htb]
\centering
\subfigure{\includegraphics[height = 0.9in]{./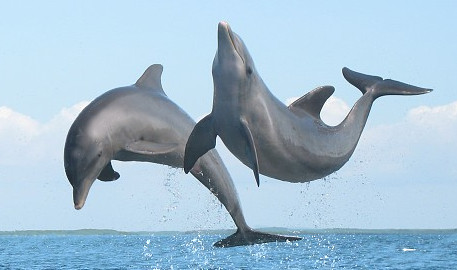}}
\subfigure{\includegraphics[height = 0.9in]{./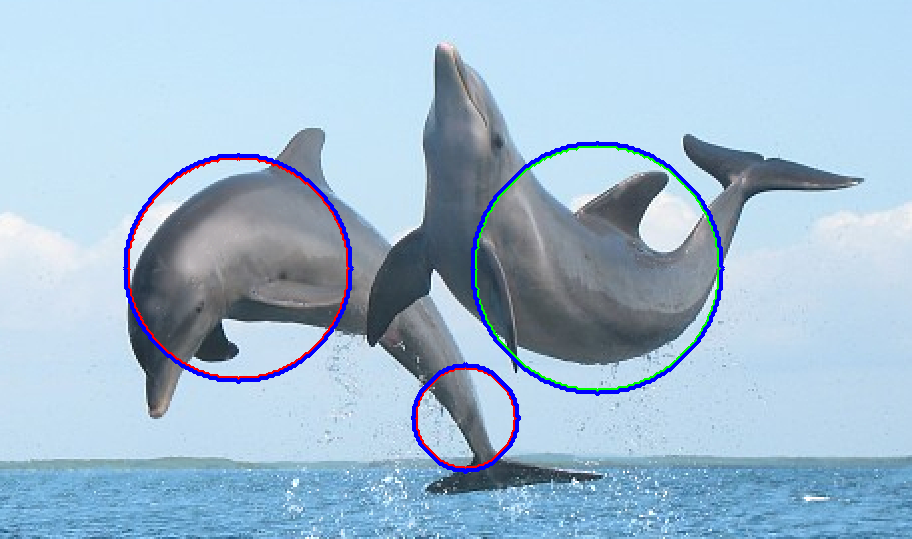}}
\subfigure{\includegraphics[height = 0.9in]{./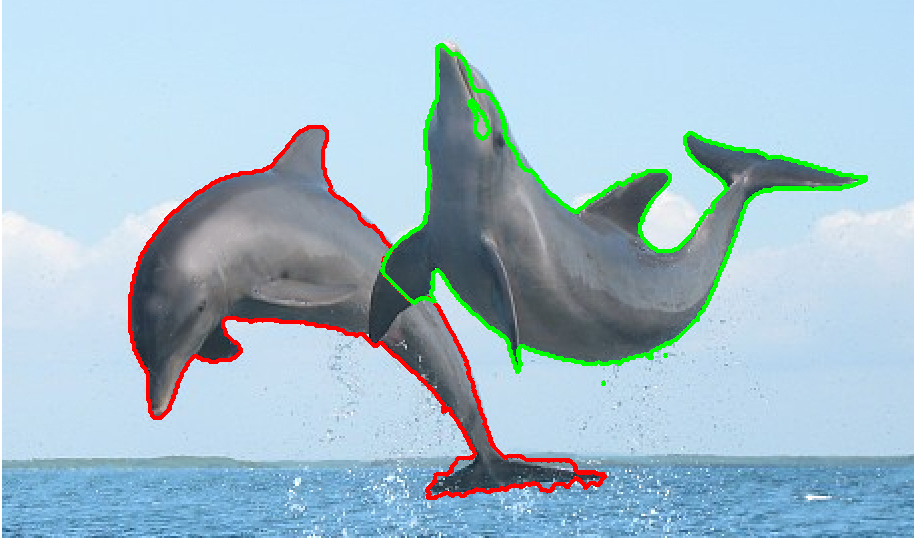}}\\  \vspace*{-0.5em}
\subfigure{\includegraphics[height = 0.9in]{./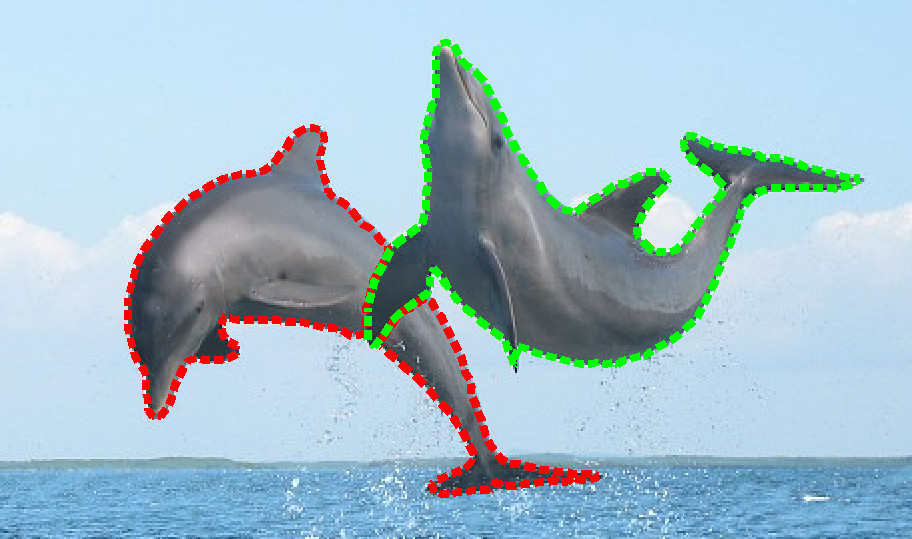}}
\subfigure{\includegraphics[height = 0.9in]{./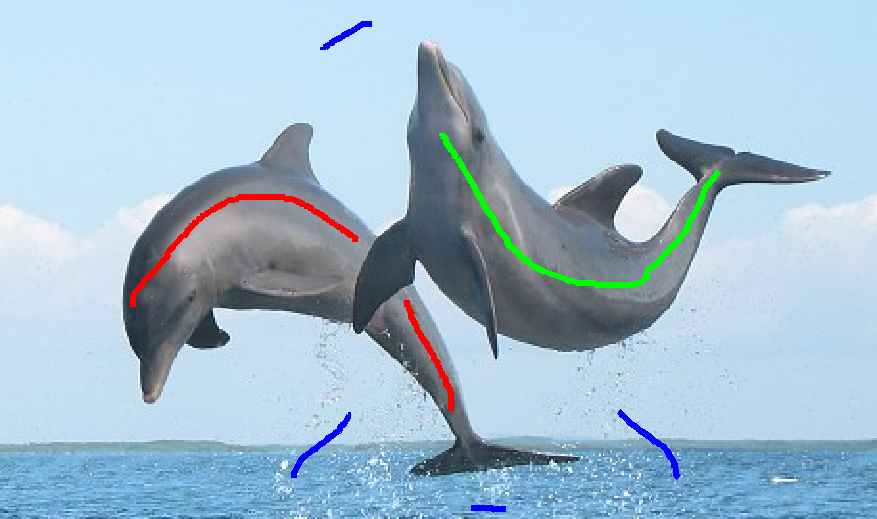}}
\subfigure{\includegraphics[height = 0.9in]{./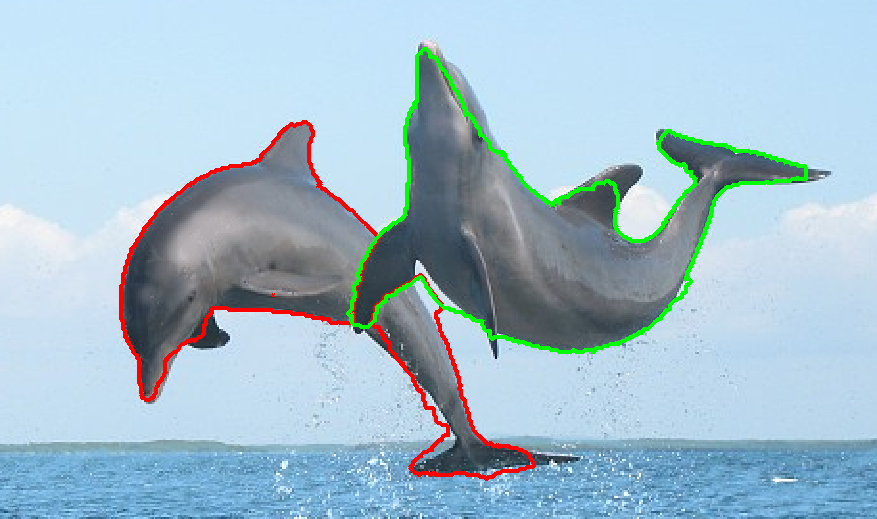}} \vspace{-0.3cm}
\caption{ Example of automatic segmentation. First column: source image and manual segmentation. Second column: initialization for region-based active contour (top) and distance-based clustering (bottom). Last column: automatic segmentation results.}
\label{fig:AutoSeg}
\end{figure*}

\section{Automatic Segmentation As An Open-Loop System}
\label{sec:AutoSeg}
A large class of segmentation algorithms can be considered as evolutionary processes. Starting from some given regions, these algorithms evolve these regions (or their boundaries) based on certain quantifiable criteria. Examples include classical active contour models, and distance-based segmentation. Typically, the evolutionary process can be described by a dynamical system, driven by the optimization of certain energy functionals. As an example, the level set formulation of active contours has been extensively employed; see \cite{FastMarching_Book, LevelSet_Fedkiw} and the references therein. We note that level sets were proposed in \cite{LevelSet_1980} and subsequently developed and applied in many works.

An image can be represented by function $I:\Omega \rightarrow \mathbb{R}^n$ defined on $\Omega \in \mathbb{R}^m$, where $m\geq 2$ and $n\geq 1$. Suppose at any given time $t\in \mathbb{R}^+$, the image can be partitioned into \textit{N} regions $\Omega_i(\bm{x},t)$ that fully cover the image without overlaps.

Each region $\Omega_i(\bm{x}, t)$ is associated with a level set function $\phi_i(\bm{x}, t)$ and is evolving from an initial state $\phi_i(\bm{x},0) = \phi_i^0(\bm{x})$. The regularized Heaviside function
\begin{equation}
\label{equ:Heaviside}
H(\phi) = \begin{cases}
1, &\mbox{if } \phi > \epsilon\\
0, &\mbox{if } \phi < -\epsilon\\
\dfrac{1}{2}\left(1+\dfrac{\phi}{\epsilon}+\dfrac{1}{\pi}\sin\big(\dfrac{\pi\phi}{\epsilon}\big)\right), &\mbox{otherwise}
\end{cases}
\end{equation}
is used to indicate the exterior and interior regions and its derivative is denoted by the regularized delta function $\delta(\phi)$.
In this paper, we take $\phi_i \geq 0$ to indicate the inside of a region $\Omega_i$.

The dynamical system describing the evolution is defined as follows,
\begin{equation}
\label{equ:Dyanmics}
\begin{split}
\dfrac{\partial{\phi_i}}{\partial{t}}&= G_i(\bm{x},t)\delta(\phi_i)\\
\phi_i(\bm{x},0) &= \phi_i^0(\bm{x})
\end{split},
\end{equation}
where $G_i:\mathbb{R}^m\times\mathbb{R}^+\rightarrow \mathbb{R}$ is the intrinsic dynamics that describes evolution of region $\Omega_i(\bm{x},t)$. Without loss of generality, each $G_i(\bm{x},t)$ is decomposed into two \textit{competing} components as
\begin{equation}
\label{equ:G}
G_i(\bm{x},t) = - \left[ g_i(\bm{x},t) - g_i^c(\bm{x},t) \right]
\end{equation}
where $g_i:\mathbb{R}^m\times\mathbb{R}^+\rightarrow \mathbb{R}$ represents the contribution
 from  $\Omega_i(\bm{x},t)$ and $g_i^c:\mathbb{R}^m\times\mathbb{R}^+\rightarrow \mathbb{R}$ is that from all other $\Omega_j(\bm{x},t)$ for $j\neq i$. A negative sign is used in front of equation \eqref{equ:G} due to the definition of $H$ for interior region. This way of decomposition has been used for modeling multiple active contours in different region-based algorithms \cite{ACClustering_PAMI06,ACClustering_ICIP04, RSS}. On the other hand, segmentation algorithms that are based on clustering pixels according to their ``distances'' to given seed points naturally fit into this formulation, since image-based distance to given points can be implemented using the level set formulation \cite{MinPath_Cohen}.  Therefore, the presented framework works for: 1) \textit{region-based active contour models} and 2) \textit{distance-based clustering}.

\subsection{Region-based Active Contour Models}
\label{sec:region_auto}
The function $g_i(\bm{x},t)$ may be defined as the statistics of $ I(\bm{x})$ inside the region $\Omega_i(\bm{x},t)$. A simple example is the first order statistics given as
\begin{equation}
\label{equ:GCV}
g_i(\bm{x},t) := \left[ I(\bm{x}) - \mu_i(t) \right]^2 \delta(\phi_i)
\end{equation}
where $I(\bm{x}) \in \mathbb{R}$ and $\mu_i(t)$ is the average value of $I(\bm{x})$ inside the region $\Omega_i$ at time $t$ \cite{Chan_Vese, CoupledAC_Yezzi}; and
\begin{equation}
g_i^c(\bm{x}_k,t) := \minunder{j\neq i}{g_j(\bm{x}_k,t)} \mbox{\quad for each }\bm{x}_k \in \Omega_i \ .
\end{equation}
Other region-based energies \cite{Local_Global, RSS, Bhattacharyya_AC} may be employed in a similar way.

\subsection{Distance-based Clustering}
\label{sec:dist_auto}
Let $\Omega_{\bm{x}_i}$ be a set of points inside a region $\Omega_i$, which is referred to as ``seed points" herein after. The distance between a point $\bm{x} \in \Omega$ and the seed region is defined as
\begin{equation}
\label{equ:Dist2Seed}
d(\bm{x}, \Omega_{\bm{x}_i}) = \min_{\bm{y} \in \Omega_{\bm{x}_i}} \min_{C \in \theta(\bm{x},\bm{y})} \int_0^1 {g_{\gamma}(C(p))\Vert C'(p) \Vert dp},
\end{equation}
where $\theta(\bm{x},\bm{y})$ is the family of all paths connecting points $\bm{x}$ and $\bm{y}$, and $p \in [0, 1]$ is the parametrization of a specific path $C:[0,1] \rightarrow \mathbb{R}^m$ weighted by an image-based function $g_\gamma :\mathbb{R}^m \rightarrow \mathbb{R}^+$. The distance $d(\bm{x}, \Omega_{\bm{x}_i})$ may be computed using the level set formulation as well by interpreting it as a front propagation problem with an image-dependent distance measure $1/g_\gamma$, where $g_{\gamma}(I) =1+\Vert \nabla I \Vert_2^2$ \cite{MinPath_Cohen}. After computing the distance from a point to each seed region, the point is assigned to the closest region.

With a slight abuse of notation, let $\phi_i^{-1}(\bm{x},t)$, $\phi_{i^c}^{-1}(\bm{x},t)$, and $\phi_{\text{min}}^{-1}(\bm{x},t)$  be the distance between point $\bm{x}$ and region $\Omega_i$, the shortest distance between the point $\bm{x}$ and any regions other than $\Omega_i(\bm{x},t)$, and the shortest distance between the point and all regions, respectively. An example of natural dynamics acting on $\phi_i$ is defined as
\begin{equation}
\label{equ:GDistTerm}
g_i(\bm{x},t) := \left\{
         		\begin{array}{rlll}
             		 &g_{\gamma}(I) 			  &\mbox{if $\phi_i^{-1}(\bm{x},t) \neq \phi^{-1}_{\text{min}}(\bm{x},t) $ }  & \\
             		 &0  						  &\mbox{otherwise }   &
         		\end{array}
      		\right.
\end{equation}
and
\begin{equation}
\label{equ:GDistTermC}
g_i^c(\bm{x},t) :=  \left\{
         		\begin{array}{rlll}
             		 &g_{\gamma}(I) 			  &\mbox{if $\phi_{i^c}^{-1}(\bm{x},t) \neq \phi^{-1}_{\text{min}}(\bm{x},t)  $ }  & \\
             		 &0  						  &\mbox{otherwise .}   &
         		\end{array}
      		\right.
\end{equation}
The equations \eqref{equ:GDistTerm} and \eqref{equ:GDistTermC} are used in the evolution of \eqref{equ:Dyanmics} \eqref{equ:G}. This formulation is essentially a clustering processing based on the shortest distance from a point to all regions.

Figure \ref{fig:AutoSeg} shows an example of automatic segmentation by using the models described in this section. With simple user initializations, either circular regions or scribbles, these two classical algorithms capture the majority of the objects, while missing some details. In particular, the region-based active contour has both evolution leakage and unreached regions, while the distance-base clustering fails to mark correct boundaries where seed regions were not properly specified. Though sophisticated methods \cite{Interactive_ConvexActiveContour, GeodesicSeg_IJCV, ConvexOpt_PAMI_Cremers} may achieve better segmentation results with the same initializations, our main focus is how to improve these classical methods by integrating user inputs from the perspective of feedback control.

\section{Interactive Segmentation As Feedback Control}
\label{sec:InterativeSeg}
Suppose the user has an ideal segmentation of the image domain into regions in mind: $\{\phi^*_i(\bm{x})\} \mbox{ for } i=1,\cdots,N$ . Then, the goal is to design a feedback control system
\begin{equation}
\label{equ:dynamics}
\begin{split}
\dfrac{\partial{\phi_i}}{\partial{t}} &= \big[G_i\left(\bm{x},t\right) + F(\phi_i, \phi^*_i) \big]\delta(\phi_i) \\
\phi_i(\bm{x},0) &= \phi_i^{0}(\bm{x})
\end{split},
\end{equation}
such that $\limunder{t\rightarrow \infty}{ \phi_i(\bm{x},t) \rightarrow \phi_i^{*}(\bm{x})}$ for $i=1,\cdots,N$, where
$F(\phi_i,\phi^*_i)$ is the control law to be defined below.

\subsection{Existence of A Regulatory Control}
\label{sec:ExistControl}
Define the pointwise and total labeling error as $\xi_i(\bm{x},t)$ and $V(\xi, t) $, respectively,
\begin{equation}
\label{equ:xi}
\xi_i(\bm{x},t) := H(\phi_i)-H(\phi^*_i)
\end{equation}
\begin{equation}
\label{equ:Lyapunov}
 V(t) = \dfrac{1}{2}\sum_{i=1}^{N}\int_{\Omega}{\xi^2_i(\bm{x},t)d\bm{x}} \ .
\end{equation}
If $\phi^*_i$s are given and $V(\xi,t) \in C^1$, the determination of $F(\cdot,\cdot)$ is straightforward by  applying the Lyapunov's direct method for stabilization. The control signal $F$ uses the bounds of the image-dependent term $G_i(\bm{x},t)$,
\begin{equation}
\label{equ:GM}
g_M(\bm{x}) := \sup_{\forall t \in \mathbb{R}^{+}, i=1,\cdots,N} \{ |G_i(\bm{x},t)|\}
\end{equation}

\begin{theorem}
\label{theorem:continuous}
The control law
\begin{equation}
\label{equ:controllaw}
F(\phi_i,\phi^*_i) = \alpha^2_i(\bm{x},t)\xi_i(\bm{x},t),
\end{equation}
where $\alpha^2_i(\bm{x},t) \geq g_M(\bm{x})$, asymptotically stabilizes the system \eqref{equ:dynamics} from $\{\phi_i(\bm{x},t)\}$ to $\{\phi^*_i(\bm{x})\},i=1,\cdots,N$, for $\epsilon$ defined in equation~\eqref{equ:Heaviside} sufficiently small.

Furthermore, the control law exponentially stabilizes the system with a convergence rate of $e^{-\nu t}$ when
$\xi$ is large in the sense that
\begin{equation}
\label{equ:scaleG}
\rho\int_{\Omega}{\delta^2(\phi_i)\xi_i^2(\bm{x},t)d\bm{x}}\leq \int_{\Omega}{\xi^2_i(\bm{x},t)d\bm{x}},i=1,\cdots, N
\end{equation}
for given constants $\nu > 0, \rho > 0$.

\end{theorem}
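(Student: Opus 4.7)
The plan is to apply Lyapunov's direct method with $V(t)$ from \eqref{equ:Lyapunov} as the candidate functional. First, I would differentiate $V$ along trajectories of the closed-loop system \eqref{equ:dynamics}. Since $\phi^*_i$ is time-independent and $H' = \delta$, the chain rule gives $\partial_t \xi_i = \delta(\phi_i)\,\partial_t \phi_i$, and substituting the dynamics yields
\begin{equation*}
\dot{V}(t) \;=\; \sum_{i=1}^{N}\int_{\Omega} \delta^2(\phi_i)\,\xi_i\bigl[G_i(\bm{x},t) + \alpha_i^2(\bm{x},t)\,\xi_i(\bm{x},t)\bigr]\,d\bm{x}.
\end{equation*}
Using the hypothesis $\alpha_i^2 \geq g_M \geq |G_i|$ together with the bound $|\xi_i|\leq 1$ that follows from $H \in [0,1]$, the quadratic contribution $\alpha_i^2 \xi_i^2$ dominates the indefinite cross term $G_i\xi_i$ pointwise on the support of $\delta(\phi_i)$, giving a sign-definite integrand and hence $\dot{V}\leq 0$ along all trajectories.

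For asymptotic stability I would then invoke a Barbalat-/LaSalle-style argument: $V$ is non-negative and non-increasing, so $\dot V \in L^1([0,\infty))$, which combined with the uniform continuity of $\dot V$ forces $\dot V(t)\to 0$, and in turn $\xi_i(\cdot,t)\to 0$ on the support of $\delta(\phi_i)$. The assumption that $\epsilon$ be sufficiently small is needed here to ensure that the only equilibrium compatible with $\dot V\equiv 0$ is the labeling-equal state $H(\phi_i)\equiv H(\phi^*_i)$, since in that regime $|\xi_i|$ is piecewise in $\{0,1\}$ outside a vanishing transition layer, removing spurious null configurations. For the exponential rate, I would plug hypothesis \eqref{equ:scaleG} into the dissipation inequality. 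The previous step produces a lower bound of the form $-\dot V \geq c\sum_i\int \delta^2(\phi_i)\xi_i^2\,d\bm{x}$ with $c$ depending on the gap $\alpha_i^2-g_M$, and \eqref{equ:scaleG} converts this boundary-localized quantity into a multiple of $\int\xi_i^2\,d\bm{x}=2V$. This yields $\dot V\leq -2\nu V$ with $\nu$ depending on $c$ and $\rho$, and Gronwall then gives $V(t)\leq V(0)e^{-2\nu t}$, hence the claimed $e^{-\nu t}$ convergence of $\|\xi(\cdot,t)\|_{L^2}$.

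The hard part will be controlling the integrand across the transition layer $|\phi_i|\leq\epsilon$, where $\delta(\phi_i)$ blows up as $O(1/\epsilon)$ and $|\xi_i|$ may lie strictly between $0$ and $1$, so that the pointwise dominance $\alpha_i^2|\xi_i|\geq g_M$ used to sign the integrand can fail in a thin neighborhood of the contour. Making this rigorous is exactly what the ``$\epsilon$ sufficiently small'' clause buys: either one passes to the sharp limit in which $\xi_i$ is piecewise constant on the support of $\delta(\phi_i)$, or one shows that the residual bad contribution is $O(\epsilon)$ and can be absorbed by an arbitrarily small strengthening of $\alpha_i^2$ above $g_M$. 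I also expect the application of LaSalle to require care, since the state lives in an infinite-dimensional function space and precompactness of trajectories in a suitable topology needs justification; however, because $\xi_i$ is uniformly bounded by $1$ and $V$ is dissipated, the pointwise/$L^2$ convergence argument via Barbalat sidesteps this, and the rest of the proof is a routine Lyapunov-direct-method and Gronwall calculation.
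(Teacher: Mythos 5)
Your overall strategy coincides with the paper's: take $V$ from \eqref{equ:Lyapunov} as the Lyapunov candidate, differentiate along trajectories via $\partial_t\xi_i=\delta(\phi_i)\,\partial_t\phi_i$, sign the resulting integrand using $\alpha_i^2\geq g_M\geq|G_i|$, and for the exponential part use \eqref{equ:scaleG} to convert the $\delta^2$-weighted dissipation into a multiple of $V$ and apply Gronwall. However, the central inequality as you have written it does not go through. First, a sign problem: substituting the control law \eqref{equ:controllaw} literally, your displayed integrand is $\delta^2(\phi_i)\bigl[G_i\xi_i+\alpha_i^2\xi_i^2\bigr]$, and if the \emph{positive} quadratic term dominates the cross term you obtain $\dot V\geq 0$, not $\dot V\leq 0$. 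The feedback must enter dissipatively; the paper's own computation in Appendix I reads $\delta^2(\phi_i)\bigl[G_i\xi_i-\alpha_i^2\xi_i^2\bigr]$, i.e., the law is effectively negative feedback $F=-\alpha_i^2\xi_i$ (where $\xi_i>0$ the point is wrongly labeled inside, so $\phi_i$ must be pushed down). You need to correct that sign or the argument inverts. Second, the domination claim is backwards: $\alpha_i^2\xi_i^2\geq|G_i|\,|\xi_i|$ requires $\alpha_i^2|\xi_i|\geq|G_i|$, and the bound $|\xi_i|\leq 1$ works \emph{against} you --- where $|\xi_i|$ is small (the transition layer) the quadratic term vanishes faster than the cross term. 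The mechanism the paper actually uses is $|\xi_i|=\xi_i^2$ on the set $\xi_i\in\{-1,0,1\}$, which turns $|G_i|\,|\xi_i|-\alpha_i^2\xi_i^2$ into $\xi_i^2\bigl(|G_i|-\alpha_i^2\bigr)\leq 0$, followed by continuity in $\epsilon$ to extend to small $\epsilon>0$. To your credit, your final paragraph identifies exactly this failure mode and sketches the correct repair, but the proof as led is built on an inequality that is false pointwise.

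Two smaller points. For asymptotic (rather than mere Lyapunov) stability the paper does not invoke Barbalat or LaSalle; it proves Lemma~\ref{lemma:G} --- that at least one $G_i$ is strictly below $g_M$ on the zero level set, as a consequence of the no-overlap condition --- to upgrade $\dot V\leq 0$ to strict negativity for $\xi\neq 0$. Your Barbalat route is a legitimate alternative that avoids precompactness issues, but note that both arguments only control $\sum_i\int_\Omega\delta^2(\phi_i)\xi_i^2\,d\bm{x}$, i.e., the labeling error on the support of $\delta(\phi_i)$; concluding that $V$ itself tends to zero for mislabeled regions far from the front does not follow immediately from either, and is essentially what hypothesis \eqref{equ:scaleG} compensates for in the exponential case. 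Your Gronwall step matches the paper's, up to a factor of two in the rate constant.
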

See Appendix-I for details of proof.

\begin{nota}
This theorem gives a sufficient condition for the existence of a control law that stabilizes the dynamical system to a given desired steady state value. Intuitively, the control law defines a localized input required to ``break'' the intrinsic dynamics.
\end{nota}

\subsection{Label Error Estimation}
\label{sec:ErrorEst}
In practice, $\{\phi^*_i\}$ is not given or completely available beforehand. User input based on current segmentation is used to predict/estimate the ideal segmentation on the fly.

\subsubsection{User Input Processing}
\label{sec:UserInput}

User interaction is modeled as a binary decision as to whether a given location is correctly labeled as inside or outside the expected segmentation. Let $L=\{1,\cdots, N\}$ be the set of labels corresponding to regions $\Omega_i(\bm{x},t)$.
User inputs are properly applied to the image to capture segmentation errors. Afterwards, the effect of user input is propagated.

Formally, the function $u_i^k :\Omega\times \mathbb{R}^+ \rightarrow \mathbb{R}$ with
\begin{equation}
\label{equ:uik}
u_i^k(\Omega_{\bm{x}_i^{k}}, t_i^{k}) = q, \quad u_i^k(\Omega^c_{\bm{x}_i^k}, t_i^{k}) = p,
\end{equation}
models the $k$th input applied at time $t_i^k$ for the $i$th label, $\Omega_{\bm{x}_i^k}$ and $\Omega_{\bm{x}_i^k}^c:=\Omega\setminus \Omega_{\bm{x}_i^k}$ are the regions the user scribbled and its complement, respectively. Constants $q,p \in \mathbb{R}$  are the initial values of $u_i^k$ at $\Omega_{\bm{x}_i^k}$ and $\Omega_{\bm{x}_i^k}^c$, respectively.

\subsubsection{Accumulation of User Input}
~\\
Let $\bm{u}(\bm{x},t) =\{u_1^1(\bm{x},t),\cdots, u_2^1(\bm{x},t), \cdots\}$ be the set of functions representing all user input effect at time $t$. Let the function $\bm{u}_i$ be the accumulated effect from all $u_i^k(\bm{x},t)$  and $\bm{u}_i^c$ be the effect all $u_j^k(\bm{x},t), j\neq i, j=1,\cdots, N$ and $k=1,2,\cdots$. The total effect of user input for label $i$ comes from the overall supports of $\bm{u}_i(\bm{x},t)$ and $\bm{u}_i^c(\bm{x},t)$, defined as $U_i(\bm{x},t) :=J(\bm{u}_i,\bm{u}_i^c)$, where $J(,)$ is a function for evaluating the strength of support from $\bm{u}_i(\bm{x},t)$ and $\bm{u}_i^c(\bm{x},t)$.
\\
\subsubsection{Label-Error Estimation}
~\\
Let $\{\hat{\phi}^*_i\}$ be an estimate of $\{{\phi}^*_i\}$ and define the error terms as
\begin{equation}
\begin{split}
\hat{\xi}_i(\bm{x},t) &:= H(\phi_i)-H(\hat{\phi}^*_i)\\
e_{U_i}(\bm{x},t)&:=H(\hat{\phi}^*_i)-H(U_i).
\end{split}
\end{equation}
The feedback in equation \eqref{equ:dynamics} will use the estimate $\{\hat{\phi}^*_i\}$,
\begin{equation}
\label{equ:dynamicsApprox}
\begin{split}
\dfrac{\partial{\phi_i}}{\partial{t}} &= [G_i\left(\bm{x},t\right) + F(\phi_i, \hat{\phi}^*_i)]\delta(\phi_i) \\
\phi_i(\bm{x},0) &= \phi_i^{0}(\bm{x})
\end{split},
\end{equation}
The estimator is an observer-like system driven by accumulated user input $U_i$ with error term $e_{U_i}$ as
\begin{equation}
\label{equ:dynamicsEstimator}
\begin{split}
\dfrac{\partial{\hat{\phi}^*_i}}{\partial{t}} &= \big[\hat{\xi_i} + f\big(U_i,e_{U_i})\big]\delta(\hat{\phi}^*_i) \\
\hat{\phi}^*_i(\bm{x},0) &= \phi_i^{0}(\bm{x})
\end{split}.
\end{equation}
Equations \eqref{equ:dynamicsApprox} and \eqref{equ:dynamicsEstimator} form a coupled dynamical system. The total labeling error is defined as 
\begin{equation}
\label{equ:UserInput}
 \text{estimator vs. user input} \quad E(t) := \dfrac{1}{2}\sum_{i=1}^{N}\int_{\Omega}{|U_i|e^2_{U_i}d\bm{x}},
\end{equation}
\begin{equation}
\label{equ:Visualization}
 \text{estimator vs. visualization} \quad \hat{V}(t) := \dfrac{1}{2}\sum_{i=1}^{N}\int_{\Omega}{\hat{\xi}^2_i d\bm{x}}.
\end{equation}
\begin{theorem}
\label{th:CoupledSys}
Let $f(U_i,e_{U_i}) = -|U_i|e_{U_i}$ and consequently
\begin{equation}
\dfrac{\partial{\hat{\phi}^*_i}}{\partial{t}} = \big[\hat{\xi_i} -|U_i|e_{U_i}\big]\delta(\hat{\phi}^*_i).
\end{equation}
Assume that user input has stopped ($U_i$ remains constantly) and Theorem \ref{theorem:continuous} is satisfied. Then the sum $V(t) := E(t) + \hat{V}(t)$ has a negative semidefinite derivative:
\begin{equation}
\label{equ:VTCouple}
V'(t) \leq -\sum_{i=1}^{N}\int_{\Omega}{\delta^2(\hat{\phi}^*_i)\big[\hat{\xi}_i - |U_i|e_{U_i}\big]^2}d\bm{x}
\end{equation}
\end{theorem}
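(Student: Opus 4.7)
The plan is to compute $\dot V(t) = \dot E(t) + \dot{\hat V}(t)$ directly, then absorb the intrinsic-dynamics contribution by invoking Theorem~\ref{theorem:continuous}. Because $U_i$ is frozen, both $|U_i|$ and $H(U_i)$ are time-independent, so all time derivatives reduce to applying $\delta(\cdot)\partial_t$ to the level sets $\phi_i$ and $\hat{\phi}^*_i$.

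First I would substitute the dynamics. Using \eqref{equ:dynamicsEstimator} with the prescribed $f = -|U_i|e_{U_i}$ gives $\dot{e}_{U_i} = \delta^2(\hat{\phi}^*_i)\bigl[\hat{\xi}_i - |U_i|e_{U_i}\bigr]$, while differentiating $\hat{\xi}_i = H(\phi_i) - H(\hat{\phi}^*_i)$ and inserting \eqref{equ:dynamicsApprox}--\eqref{equ:dynamicsEstimator} yields $\dot{\hat{\xi}}_i = \delta^2(\phi_i)\bigl[G_i + F(\phi_i,\hat{\phi}^*_i)\bigr] - \delta^2(\hat{\phi}^*_i)\bigl[\hat{\xi}_i - |U_i|e_{U_i}\bigr]$. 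Plugging these into $\dot E = \sum_i\!\int |U_i|e_{U_i}\,\dot{e}_{U_i}\,d\bm{x}$ and $\dot{\hat V} = \sum_i\!\int \hat{\xi}_i\,\dot{\hat{\xi}}_i\,d\bm{x}$ produces an ``intrinsic-dynamics'' piece carrying $G_i + F$, together with a set of ``estimator-correction'' terms that all share the common factor $\delta^2(\hat{\phi}^*_i)\bigl[\hat{\xi}_i - |U_i|e_{U_i}\bigr]$, with coefficients $|U_i|e_{U_i}$ from $\dot E$ and $-\hat{\xi}_i$ from $\dot{\hat V}$.

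Next I would discard the intrinsic-dynamics piece. Since Theorem~\ref{theorem:continuous} is assumed to hold, $F = \alpha_i^2\hat{\xi}_i$ with $\alpha_i^2 \geq g_M \geq |G_i|$, and the sign-matching step in the proof of Theorem~\ref{theorem:continuous} forces $\hat{\xi}_i\,\delta^2(\phi_i)\bigl[G_i + \alpha_i^2\hat{\xi}_i\bigr]$ to be nonpositive pointwise, for $\epsilon$ small enough that $|\hat{\xi}_i| \approx 1$ off the transition band dominates $|G_i|/\alpha_i^2$. Integrating and summing leaves a nonpositive contribution that can be dropped when forming an upper bound on $\dot V$.

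Finally I would collect the estimator-correction terms and complete the square:
\begin{equation*}
\sum_{i=1}^{N}\!\int_{\Omega}\!\bigl(|U_i|e_{U_i} - \hat{\xi}_i\bigr)\,\delta^2(\hat{\phi}^*_i)\bigl[\hat{\xi}_i - |U_i|e_{U_i}\bigr]d\bm{x} = -\sum_{i=1}^{N}\!\int_{\Omega}\!\delta^2(\hat{\phi}^*_i)\bigl[\hat{\xi}_i - |U_i|e_{U_i}\bigr]^2 d\bm{x},
\end{equation*}
which is exactly the bound \eqref{equ:VTCouple}. The main obstacle is justifying that the dissipation argument of Theorem~\ref{theorem:continuous} still applies when $\hat{\phi}^*_i$ replaces the true $\phi^*_i$ in both the error $\hat{\xi}_i$ and in the control $F$; I would handle this by revisiting the pointwise sign analysis in Appendix-I and noting that it relies only on the pointwise bound $|G_i| \leq g_M$ and on $|\hat{\xi}_i|$ being close to unity off the $\epsilon$-band, neither of which refers to the particular target level set.
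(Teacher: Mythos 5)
Your proposal follows the paper's proof essentially verbatim: differentiate $E$ and $\hat{V}$, substitute the coupled dynamics (using that $U_i$ is frozen), complete the square on the $\delta^2(\hat{\phi}^*_i)$ terms to obtain $-\sum_i\int\delta^2(\hat{\phi}^*_i)[\hat{\xi}_i-|U_i|e_{U_i}]^2d\bm{x}$, and drop the remaining intrinsic-dynamics term by the dissipation argument of Theorem~\ref{theorem:continuous}. The only blemish is the sign in that intrinsic term: as you write it, $\hat{\xi}_i\,\delta^2(\phi_i)\bigl[G_i + \alpha_i^2\hat{\xi}_i\bigr]$ is nonnegative when $\alpha_i^2$ dominates $|G_i|$, whereas the nonpositive quantity actually needed (and used in the paper's appendix) is $\delta^2(\phi_i)\bigl[G_i\hat{\xi}_i - \alpha_i^2\hat{\xi}_i^2\bigr]$, i.e., the feedback must oppose $\hat{\xi}_i$ — a sign discrepancy already present between the paper's statement of the control law and its own computation, so your argument goes through once that sign is fixed.
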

See Appendix-II for details of proof.

A large number of evolutionary methods can be augmented to include user interactions using the proposed design principle. Two representative models of user input $U_i$ are discussed in the following section.

\subsection{Examples Of Control-based Segmentation Methods}

\subsubsection{Controlled Region-based Active Contour Models}
~\\
The natural dynamics $G_i(\bm{x},t)$ described in Section \ref{sec:region_auto} is used.
A kernel-based method \cite{RBF_Input_Yezzi} is employed to model the user input for $u_i^k$ as
\begin{equation}
\label{equ:AC_input}
u_i^k(\bm{x},t_i^{k+}) = h_0(d(\bm{x}, \Omega_{\bm{x}_i^{k}})),
\end{equation}
where $d(\bm{x}, \Omega_{\bm{x}_i^{k}})$ is the weighted distance from $\bm{x}$ to $\Omega_{\bm{x}_i^{k}}$ defined in \eqref{equ:Dist2Seed}, and $h_0$ is a decreasing function with respect to this distance. In this paper, we use $h_0=(d_{max}-d)/(d_{max}-d_{min})$, where $d,d_{max}, d_{min}$ are the distance and its corresponding extrema, respectively. The value of $d_{max}$ controls the maximal range the current input affects. In addition, the values of $p,q$ in equation \eqref{equ:uik} are set as $p =1$ and $q=0$. The overall effect for label $L=i$ is defined as
\begin{equation}
\bm{u}_i(\bm{x},t) = \sum_{k} {u_i^k(\bm{x},t)},
\end{equation}

To  propagate and smooth the effect of user input,  a diffusion process is applied to $\bm{u}_i(\bm{x},t)$ as in \cite{KSlice_TMI}, where
\begin{equation}
\begin{split}
\dfrac{\partial \bm{u}_i}{\partial t} &= \bm{u}_i + \nabla\cdot\Big[H\big((\bm{u}_i/g_{\text{M}})^2-1\big)\nabla \bm{u}_i\Big]\\
\bm{u}_i(\bm{x},0)&=0
\end{split}.
\end{equation}

The total user input effect is defined as
\begin{equation}
U_i(\bm{x},t):=\bm{u}_i(\bm{x},t) - \sum_{j \neq i} \bm{u}_j(\bm{x},t)
\end{equation}

\subsubsection{Controlled Distance-based Clustering}
~\\
In this example, we use the simple natural dynamics $G_i(\bm{x},t)$ as described in Section \ref{sec:dist_auto}. More sophisticated schemes such as \cite{GeodesicSeg_IJCV} may be used. The effect of user input is defined to have the same metric as the system equations,
\begin{equation}
\begin{split}
&u_i^k(\bm{x},t) = \min_{\bm{y} \in \Omega_{\bm{x}_i^k}} \min_{C \in \theta(\bm{x},\bm{y})} \int_0^1 {g_{\gamma}(C(l))\Vert C'(l) \Vert dl}\\
&u_i^k(\Omega_{\bm{x}_i^{k}}, t_i^{k}) = q, \quad u_i^k(\Omega^c_{\bm{x}_i^k}, t_i^{k}) = p
\end{split},
\end{equation}
with $q = 0$ and $p = \infty$, where $\theta(\bm{x},\bm{y})$ is the set of all paths connecting points $\bm{x}$ and $\bm{y}$, and $l$ is the parameterization of a particular path $C$ weighted by function $g_\gamma$.

The total user input effect at point $\bm{x}$ is computed as
\begin{equation}
U_i(\bm{x},t):= -\min_{u_k \in \bm{u}_i,\bm{y}=\bm{x}} u_k(\bm{y},t)+\min_{u_k \in \bm{u}_i^c,\bm{y}=\bm{x}} u_k(\bm{y},t).
\end{equation}

\subsection{Impulsive Control: For Efficient User Interactions}
\label{sec:ImpulseControl}
At a given time \textit{t}, the regions of current and ideal segmentation for a given label \textit{i}, denoted by $\Omega_i(t)$ and $\Omega^*_i(t)$ respectively, can be represented as
\begin{equation}
\begin{split}
\Omega_i(t)&={\Omega^{correct}_i(t)}\cup {\Omega^{mclass}_i(t)}\\
\Omega^*_i(t)&={\Omega^{correct}_i(t)}\cup{\Omega^{nreached}_i(t)},
\end{split}
\end{equation}
where $\Omega^{correct}_i(t)$ is the correct segmentation, $\Omega^{mclass}_i(t)$ is the region mis-classified as the label \textit{i}, and $\Omega^{ureached}_i(t)$ is the region not reached by the label \textit{i}. Note that, $\Omega^{mclass}_i(t)$ is also a unreached region of other labels. See Figure \ref{fig:ErrorDef} for example, where $\Omega^{mclass}_1(t) = \Omega^{ureached}_3(t)$. Therefore, user input can be focused on the unreached regions. Impulsive control may be used to apply the input immediately to the coupled dynamical system, rather than waiting for their effect to reach current fronts as in \cite{KSlice_TMI}.
\begin{figure}[!ht]
\centering
\includegraphics[width = 2.6in]{./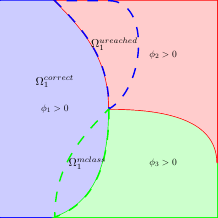}
\caption{An example of segmentation errors   at time \textit{t}. }
\label{fig:ErrorDef}
\end{figure}

Suppose one user input is applied at $\Omega_{\bm{x}_i^k}$ and $t_i^k$. Let $P:\Omega\rightarrow \mathbb{R}$  such that  $H(P(\Omega_{\bm{x}_i^k}))=H(U_i(\Omega_{\bm{x}_i^k}))$. Then, the impulse effect of this input to the coupled system \eqref{equ:dynamicsApprox}, \eqref{equ:dynamicsEstimator} is defined as
\begin{equation}
\label{equ:impulseCoupled}
\begin{split}
\Delta\hat{\phi}^*_i(\Omega_{\bm{x}_i^k}, t^k_i+)&=P(\Omega_{\bm{x}_i^k})-\hat{\phi}^*_i(\Omega_{\bm{x}_i^k}, t^k_i)\\
\Delta\phi_i(\Omega_{\bm{x}_i^k}, t^k_i+) &= \hat{\phi}^*_i(\Omega_{\bm{x}_i^k}, t^k_i+) - \phi_i(\Omega_{\bm{x}_i^k}, t^k_i).
\end{split}
\end{equation}
One simple example of \textit{P} is $P=sign(U_i)$.

\begin{assumption}
Assume that the size of each unreached region $\Omega^{nreached}_i(t)$ can be driven sufficiently small by a \textit{finite} sequence of user input $\{\Omega_{\bm{x}_i^k}(t_i^k)\}$, $k=1,\cdots, K_i$, where $K_i$ is the number of user input for label \textit{i}.
\label{assump:impulse}
\end{assumption}

\begin{nota}
The rationale of this assumption is based on the fact that since $\Omega^{nreached}_i(t)$ is compact, then it has a \textit{finite subcover}, consisting of a finite set $\{\omega_i^k\}$ with $\overline{\omega_i^k}=H(u_i^k(t))$ from user input $u_i^k$  defined in Equation \eqref{equ:uik}. A properly applied $u_i^k$ may increase mis-classified region but it increases correct region as well. Thus, it is reasonable to assume the existence of a ``finite'' number of user inputs required for a desired segmentation.
\end{nota}

\begin{corollary}
Consider the coupled system \eqref{equ:dynamicsApprox}, \eqref{equ:dynamicsEstimator} with the assumption \ref{assump:impulse}. Suppose Theorem \ref{th:CoupledSys} is satisfied. Then, the coupled system with impulse effect \eqref{equ:impulseCoupled} has the same stability property and the final segmentation is sufficiently close to the ideal one.
\end{corollary}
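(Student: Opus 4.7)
The plan is to extend the Lyapunov analysis of Theorem~\ref{th:CoupledSys} to the impulsive setting by decomposing the evolution of $V(t) = E(t) + \hat{V}(t)$ into finitely many continuous segments separated by jumps. Let $\tau_1 < \tau_2 < \cdots < \tau_K$ be the ordered user-input times, where $K = \sum_i K_i$ is finite by Assumption~\ref{assump:impulse}. On each open interval $(\tau_{k-1}, \tau_k)$ the accumulated input $U_i$ is piecewise constant, so the hypothesis ``user input has stopped'' in Theorem~\ref{th:CoupledSys} is satisfied and \eqref{equ:VTCouple} gives $V'(t) \leq 0$ throughout the interval.

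The core step is to check that the jumps \eqref{equ:impulseCoupled} do not increase $V(t)$. The construction sets $\hat{\phi}^*_i(\Omega_{\bm{x}_i^k},\tau_k^+) = P(\Omega_{\bm{x}_i^k})$ with $H(P) = H(U_i)$ on $\Omega_{\bm{x}_i^k}$, which forces $e_{U_i}(\Omega_{\bm{x}_i^k},\tau_k^+) = 0$; the subsequent jump in $\phi_i$ to match $\hat{\phi}^*_i(\cdot,\tau_k^+)$ likewise forces $\hat{\xi}_i(\Omega_{\bm{x}_i^k},\tau_k^+) = 0$. Since the impulse modifies neither state outside $\Omega_{\bm{x}_i^k}$, the contribution of $\Omega_{\bm{x}_i^k}$ to both $E(t)$ and $\hat{V}(t)$ drops to zero while the remainder of the integrals is unchanged, yielding $V(\tau_k^+) \leq V(\tau_k)$.

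Combining the two observations, $V(t)$ is non-increasing overall, with strict continuous decay on each interval and downward (or neutral) jumps at the $K$ impulse instants. After $\tau_K$, no further impulses occur, $U_i$ remains constant forever, and Theorem~\ref{th:CoupledSys} applies uninterrupted, so $(\phi_i,\hat{\phi}^*_i)$ converges to the invariant set where $\hat{\xi}_i = |U_i|e_{U_i}$ pointwise, i.e.\ the Heaviside labels of $\phi_i$, $\hat{\phi}^*_i$, and $U_i$ coincide wherever $U_i$ is supported. The closeness claim is then read off Assumption~\ref{assump:impulse}: since the finite input sequence reduces $|\Omega_i^{nreached}|$ below any prescribed tolerance, $\|H(\hat{\phi}^*_i) - H(\phi^*_i)\|_{L^2}$ is bounded by the same tolerance, and a triangle inequality places the terminal segmentation arbitrarily close to the ideal $\{\phi_i^*\}$.

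The principal obstacle is the jump analysis: it is essential that the impulse rule aligns the three Heaviside labels $H(\phi_i)$, $H(\hat{\phi}^*_i)$ and $H(U_i)$ on $\Omega_{\bm{x}_i^k}$ simultaneously, so that \textbf{both} summands of $V$ jump downward rather than only one. A secondary concern, Zeno-type accumulation of impulses, is not resolved by dynamical analysis but is ruled out explicitly by the finiteness clause in Assumption~\ref{assump:impulse}; this is the price paid for stating the corollary as a simple consequence of Theorem~\ref{th:CoupledSys} rather than as a more delicate impulsive stability result.
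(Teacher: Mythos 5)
Your argument is correct and rests on the same skeleton as the paper's proof: finitely many impulses (guaranteed by Assumption~\ref{assump:impulse}), the dissipation bound \eqref{equ:VTCouple} on the intervals between them and in particular for all $t>t_K$ after the last impulse, and Assumption~\ref{assump:impulse} again to conclude that the limiting segmentation is close to the ideal one. Where you go beyond the paper is the jump analysis: you show that the impulse rule \eqref{equ:impulseCoupled} aligns $H(\phi_i)$, $H(\hat{\phi}^*_i)$, and $H(U_i)$ on $\Omega_{\bm{x}_i^k}$ simultaneously, so the local contributions to both $E$ and $\hat{V}$ are annihilated and $V(\tau_k^+)\le V(\tau_k)$. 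The paper does not attempt this; it only observes that, the impulses being finite in number, the Lyapunov bound holds uninterrupted after the last one. Your stronger statement is the genuine impulsive-Lyapunov argument, but it needs one qualification: at an input instant $U_i$ itself jumps (the new $u_i^k$ is accumulated and propagated), so $|U_i|$ and $H(U_i)$ --- hence $e_{U_i}$ --- may change outside $\Omega_{\bm{x}_i^k}$, and the claim that ``the remainder of the integrals is unchanged'' is not literally true. This is precisely the loophole the paper's weaker argument sidesteps: even if $V$ increased at some input instants, finiteness of the impulse sequence means the monotone decay after $t_K$ still delivers the stability conclusion. Your proof is sound once the jump step is either restricted to the $(\phi_i,\hat{\phi}^*_i)$ part of the state or simply replaced by the paper's finiteness argument; as written, it buys a cleaner monotonicity statement at the cost of an unstated assumption that $U_i$ is frozen across the jump.
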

\begin{proof}
Since Theorem  \ref{th:CoupledSys} and assumption \ref{assump:impulse} are satisfied, then $V'(t) \leq 0$ for almost all $t\in \mathbb{R}^+$. In addition, there are only a finite number of impulses. Then, the bound of $V'(t)$ in Equation \eqref{equ:VTCouple} still holds for $t > t_K$, where $t_K$ is the last instant of impulse. What's more, the assumption \ref{assump:impulse} ensures to get an expected segmentation.
\end{proof}

\begin{nota}
Applying user input immediately to the segmentation process has been empirically used in lots of interactive segmentation algorithms \cite{GrabCut, GrowCut, GeodesicSeg_IJCV}. This corollary sheds light on the rationale of this commonly used strategy from the perspective of impulsive control of a feedback system.
\end{nota}

\subsection{Implementation}
Efficiency is a key factor in determining the performance of an interactive system. Thus, it is required to add as less computational load and memory cost as possible to implement the feedback control signal. To this end, inputs to each object is grouped and evolved by using a single unit/array. That is,  the state of $\bm{u}_i(\bm{x},t)$ is recorded by using a single array to reduce memory cost.

In addition, efficient implementations were employed for the original automatic methods. Specifically, for the \textit{region-based active contour model}, a sparse-level set implementation \cite{SparseLS} was used that keeps track of $\phi(x,t)=0$ without re-initialization. In the \textit{distance-based clustering} formulation, since $G$ is locally static (see equation \eqref{equ:GDistTerm}), the evolution reduces to solving a static Hamilton-Jacobi equation \cite{LevelSet_Fedkiw}. Thus, distance information can be computed locally in a monotonic way for which efficient numerical schemes such as those given in \cite{FMM_Tsitsiklis, Dijkstra59} may be applied. If no user input is employed, the algorithm has a similar structure as in image segmentation utilizing the Fast Marching Method as originally proposed in \cite{FMM_Tsitsiklis} and developed and extended in many other works; see \cite{FastMarching_Book, LevelSet_Fedkiw, FMM_Segmentation} and the references therein. Introducing the control framework enables the algorithm to be non-static and as such, results in increased flexibility.

\section{Experimental Results}
\label{sec:Results}
In this section, we first present examples of user effect for both \textit{region-based active contours} and \textit{distance-based clustering}. Then, we demonstrate the advantages (and/or disadvantages) of the proposed control framework by comparing it to a popular interactive segmentation method in terms of \textbf{user's effort} and \textbf{predictability}. Next, we present results of applying the proposed framework for segmenting challenging medical images. Finally, we use examples to illustrate the relations between the control-based method and some existing algorithms.

In this section, the localized region-based active contour energy \cite{Local_Global} was implemented for the \textit{region-based active contour model} and a gradient-based distance measure \cite{MinPath_Cohen} was used for the \textit{distance-based clustering} methodology.

\subsection{Examples of User Effects}
An example of segmenting the dolphin flipper using the proposed region-based active contour method is shown in Figure \ref{fig:RegionACExample}. An illustration of segmenting the dolphin flukes using the proposed distance-based clustering algorithm is given in Figure \ref{fig:DistClusterExample}. As can be seen from these simple examples, the proposed algorithm requires only a small amount of user interactions to correct the segmentations towards ideal boundaries.
\begin{figure}[!ht]
\centering
\subfigure[]{\includegraphics[width = 0.8in]{./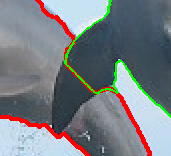}}
\subfigure[]{\includegraphics[width = 0.8in]{./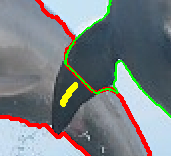}}
\subfigure[]{\includegraphics[width = 0.8in]{./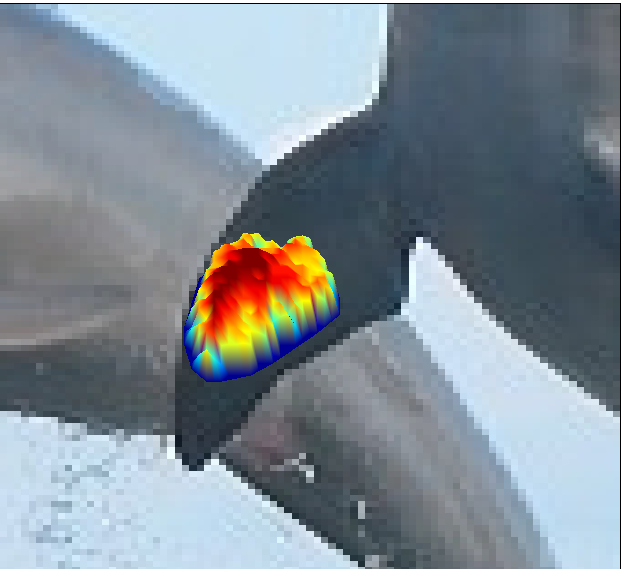}}
\subfigure[]{\includegraphics[width = 0.8in]{./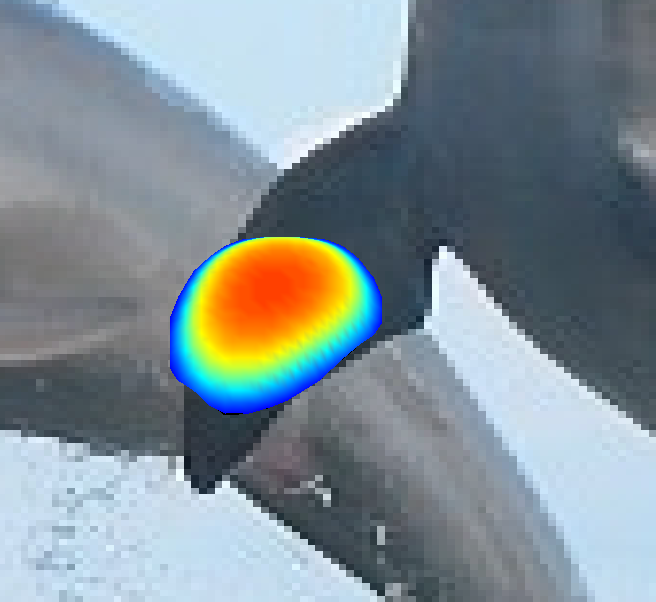}} \\ \vspace{-0.3cm}
\subfigure[]{\includegraphics[width = 0.8in]{./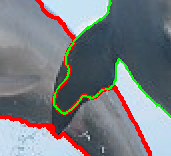}}
\subfigure[]{\includegraphics[width = 0.8in]{./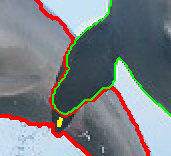}}
\subfigure[]{\includegraphics[width = 0.8in]{./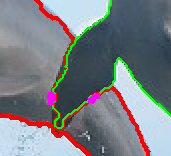}}
\subfigure[]{\includegraphics[width = 0.8in]{./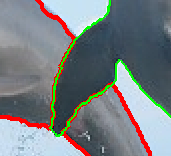}} \vspace{-0.3cm}
\caption{Example of user effect to segment flipper using region-based active contour model. Two dolphins are marked as red (label $L = 1$) and green (label $L = 2$), respectively. (a) Before user interaction. (b) First user input for $L=2$, $\Omega_{\bm{x}_2^{1}}$ at time $t_2^1$ and (c) effect of the user input $u_2^1(\bm{x},t_2^{1+})$ at time $t_2^{1+}$. (d) Propagation of the user input effect $\bm{u}_2(t)$ and (e) the contours at time $t_2$; (f) One more input $\Omega_{\bm{x}_2^{2}}$ for $L=2$; (g) Two inputs for $L=1$ and (h) the final segmentation.}
\label{fig:RegionACExample}
\end{figure}
\begin{figure}[!ht]
\centering
\subfigure[]{\includegraphics[width = 1.0in]{./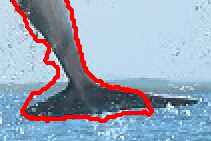}}
\subfigure[]{\includegraphics[width = 1.0in]{./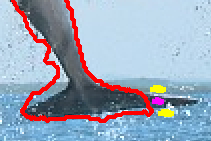}}
\subfigure[]{\includegraphics[width = 1.0in]{./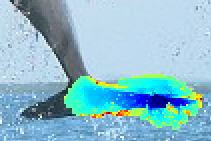}}\\ \vspace{-0.3cm}
\subfigure[]{\includegraphics[width = 1.0in]{./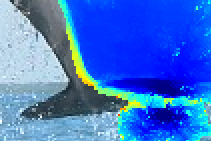}}
\subfigure[]{\includegraphics[width = 1.0in]{./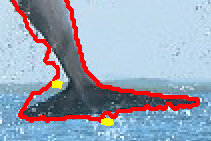}}
\subfigure[]{\includegraphics[width = 1.0in]{./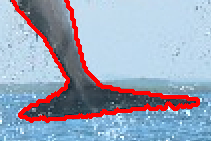}} \vspace{-0.3cm}
\caption{ Example of user effect to segment flukes using distance-based clustering. (a) Before user interaction. (b) Inputs for flukes (red, label $L= 1$) and background (blue, label $L = 3$). (c) Prorogation of $\bm{u}_1(\bm{x},t)$ from $\Omega_{\bm{x}_1}$ (dark blue) and (d) $\bm{u}_3(\bm{x},t)$ from $\Omega_{\bm{x}_3}$ (dark blue); (e) Two inputs for label 3; and (f) the final segmentation.}
\label{fig:DistClusterExample}
\end{figure}
\begin{figure*}[!hbt]
\centering
\subfigure{\includegraphics[height = 0.9in]{./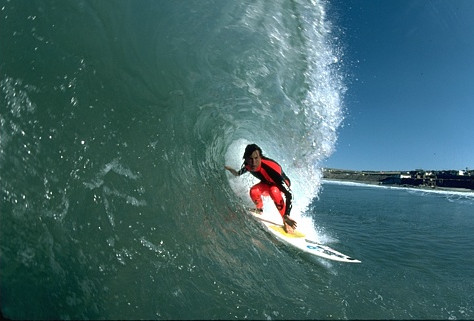}}
\subfigure{\includegraphics[height = 0.9in]{./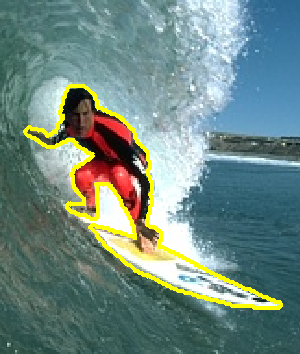}}
\subfigure{\includegraphics[height = 0.9in]{./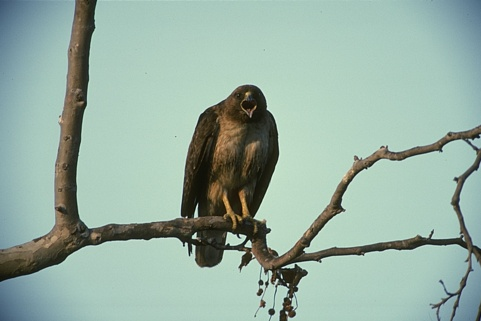}}
\subfigure{\includegraphics[height = 0.9in]{./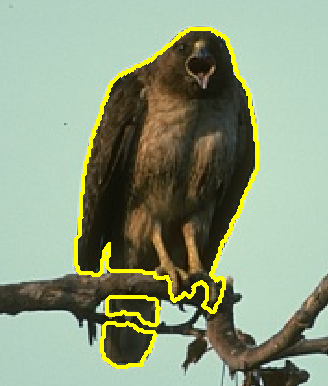}}\\ \vspace{-0.2cm}
\subfigure{\includegraphics[width = 0.9in]{./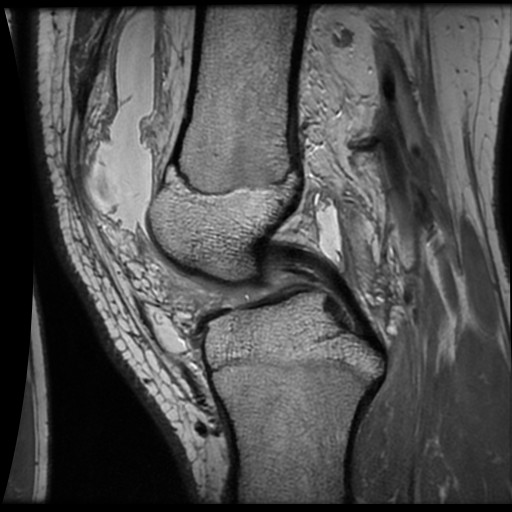}}
\subfigure{\includegraphics[width = 0.9in]{./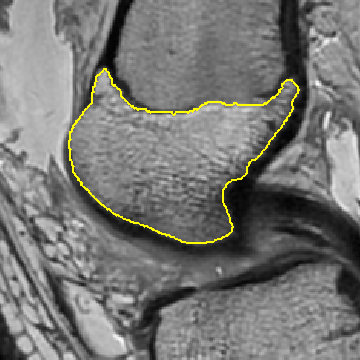}}
\subfigure{\includegraphics[width = 0.9in]{./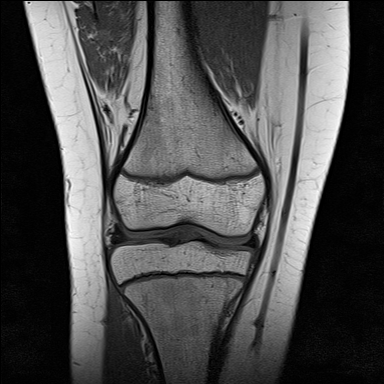}}
\subfigure{\includegraphics[width = 0.9in]{./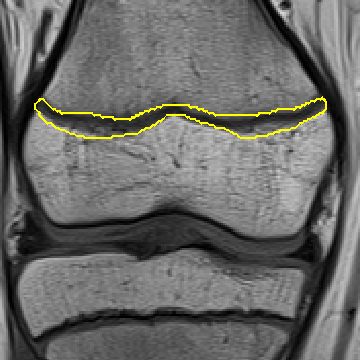}} \vspace{-0.3cm}
\caption{The general (first row) and medical (second row) images used in a quantitative comparison of GrabCut and the proposed algorithm. Manual segmentations are marked in yellow.}
\label{fig:compareManu}
\end{figure*}
\subsection{Effectiveness of The Proposed Control Framework}
\label{sec:Effectiveness}
\subsubsection{Selection of Data}
Two general images from \cite{BerkeleyImageDS} and two medical images were used to quantitatively compare the presented methods with the popular GrabCut algorithm \cite{GrabCut}. The general images considered are given in the first row of Figure \ref{fig:compareManu}.  Specifically, these images where chosen as illustrate strong local contrast and at varying parts of the image (e.g., ambiguous boundaries at the bottom of the bird image). The medical images seen in the second row of Figure \ref{fig:compareManu} present a different issue - the targeted object (epiphysis/physis) has an intensity profile that is comparable to surrounding objects within the background.  In short, the example in Figure \ref{fig:compareManu} is shown to illustrate and motivate the proposed framework.

\subsubsection{Quantitative Comparison of \textbf{User's Effort}}
A location through which the cursor was dragged is defined as an ``actuated voxel''; and the \textbf{total actuated voxels} is a robust indicator of user effort to complete a segmentation.

In this test, the interactive user input via mouse click-and-drag was implemented and measured identically for each algorithm. And the extent around the cursor that mark seed regions in GrabCut were not counted towards the total actuated voxels.

Three experiments were conducted with different initialization for each image. The actuated pixels after initialization are shown in Figure \ref{fig:compareAct}. At termination, all of the segmentations have greater than \%95 overlap with a manually segmented reference.  These results show the different characteristics of these algorithms. The proposed algorithm has a lower mean actuated count in all images and tighter clustering in three of them (except the bird image) across repeated segmentations. The wider cluster in the bird image segmentation from the region-based algorithm is reasonable as it has been observed that region-based segmentation methods seem to be more sensitive in segmenting an object with poor local contrast (preventing the bird's tail and claw bleeding through the tree branches) as compared to the distance-based one.  The differences of performance are significant for medical images, where the background and foreground have very similar intensity distribution, since one iteration of the Grabcut can change the segmentation dramatically. On the contrary, the rapid and continuous visual feedback provided by the provided algorithm prevents the developing of a large error, which reduces user's effort in actuating pixels.
\begin{figure*}[!ht]
\centering
\subfigure[]{\includegraphics[width = 2in]{./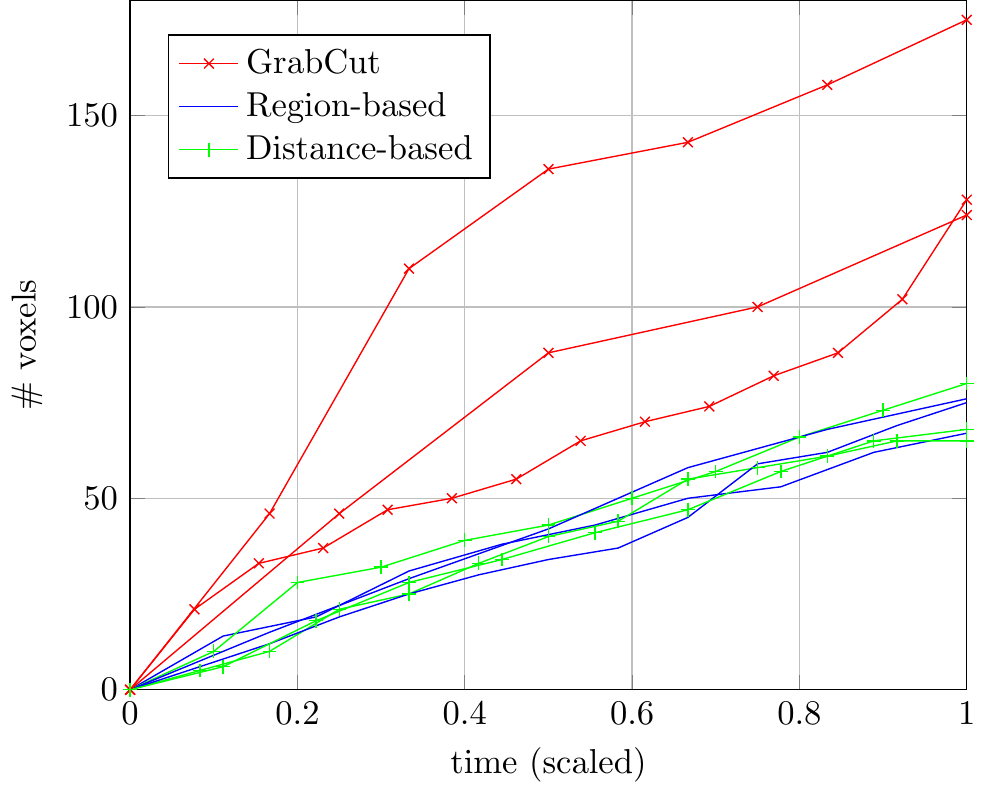}}
\subfigure[]{\includegraphics[width = 2in]{./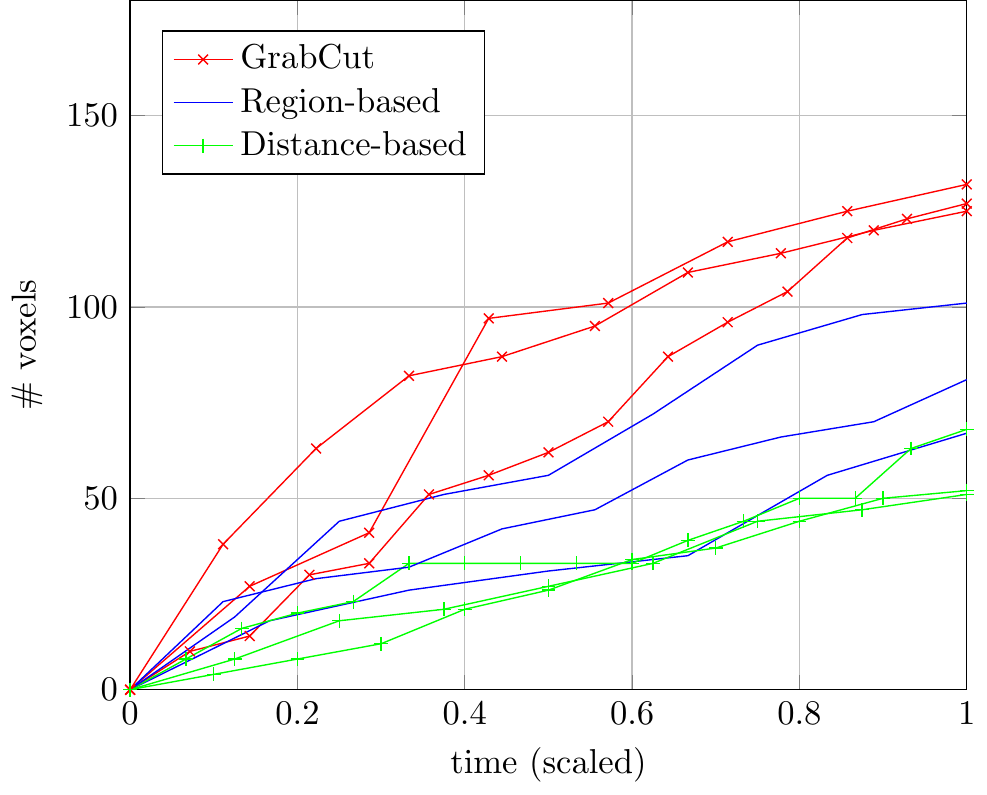}}\\ \vspace{-0.3cm}
\subfigure[]{\includegraphics[width = 2in]{./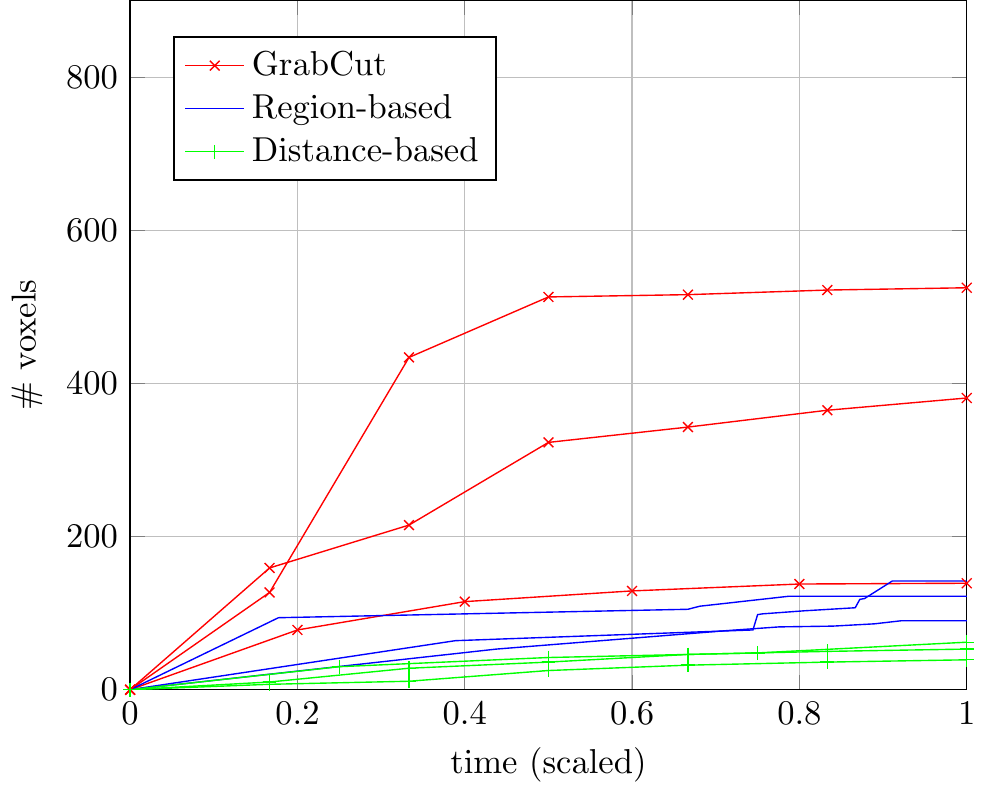}}
\subfigure[]{\includegraphics[width = 2in]{./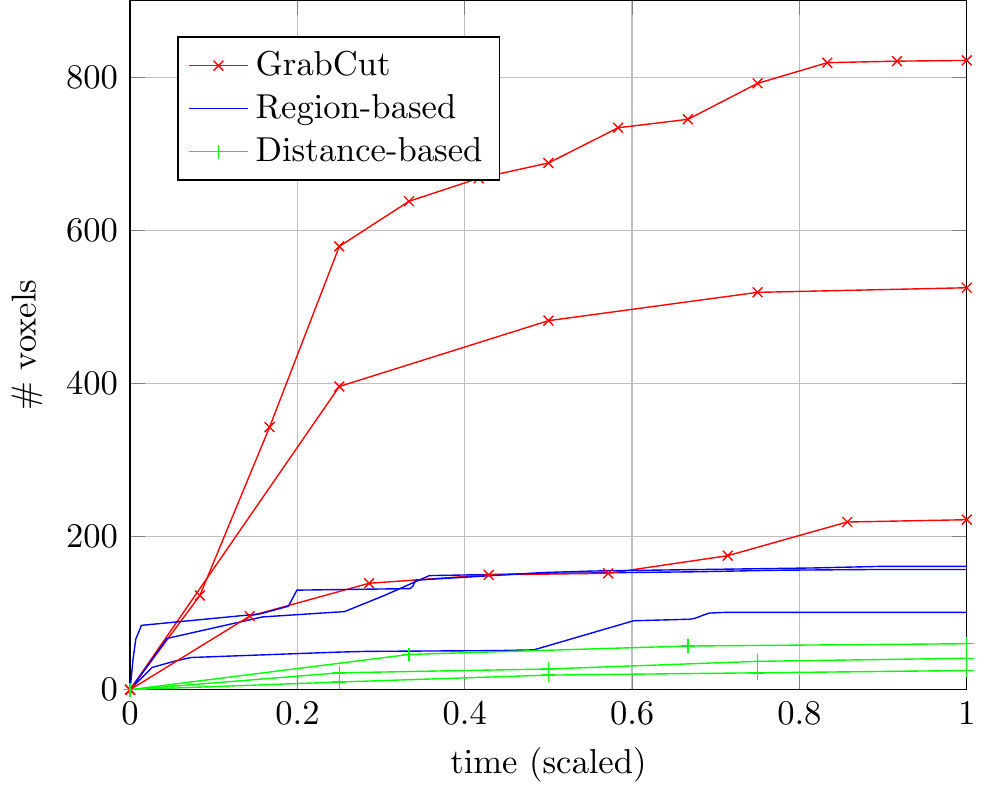}} \vspace{-0.3cm}
\caption{Comparison of actuated voxels over time, after initialization for (a) surfing, (b) bird, (c) epiphysis, and (d) physis images. The proposed algorithm has a lower mean actuated pixels across repeated segmentations.}
\label{fig:compareAct}
\end{figure*}

\subsubsection{Comparison of Algorithmic \textbf{Predictability}}
Predictability of how the segmentation changes in response to mouse strokes is a criterion for practical ease of use. Quantitatively, the change of segmentation is measured by ``reclassified'' voxels, of which the assigned labels change between background and foreground. The \textbf{predictability} is reflected by looking at the dynamic response between user actuated voxels and reclassified voxels recorded over time as $(\# \text{newly actuated voxels}, \# \text{reclassified voxels})$.

Figure~\ref{fig:comparePred} shows the dynamic response from the experiments described in the previous section. Each mark on the figure corresponds to one iteration when new user input was applied. Linear regression lines are overlaid on the data. All algorithms have a very similar dynamic response in the surfing image segmentation, in Figure \ref{fig:predSurf}, since the image has strong local contrast. Differences of predictability become observable in the bird segmentation, where the region-based approach has a tighter distribution along the fitting line because it is less sensitive to the poor-defined boundaries around the bird's tail. The advantages of using control-based algorithm are shown-up in the medical image segmentation. Two issues become apparent for the physis segmentation. First, the distribution of GrabCut data points is quite broad; Second, some of the GrabCut data points are below the dashed pink line, indicating a waste of user effort since there are more voxels actuated than reclassified. The dynamic response of GrabCut makes it hard for a user to predict how much change new mouse strokes will cause.
\begin{figure*}[!ht]
\centering
\subfigure[]{\label{fig:predSurf}\includegraphics[width = 2.0in]{./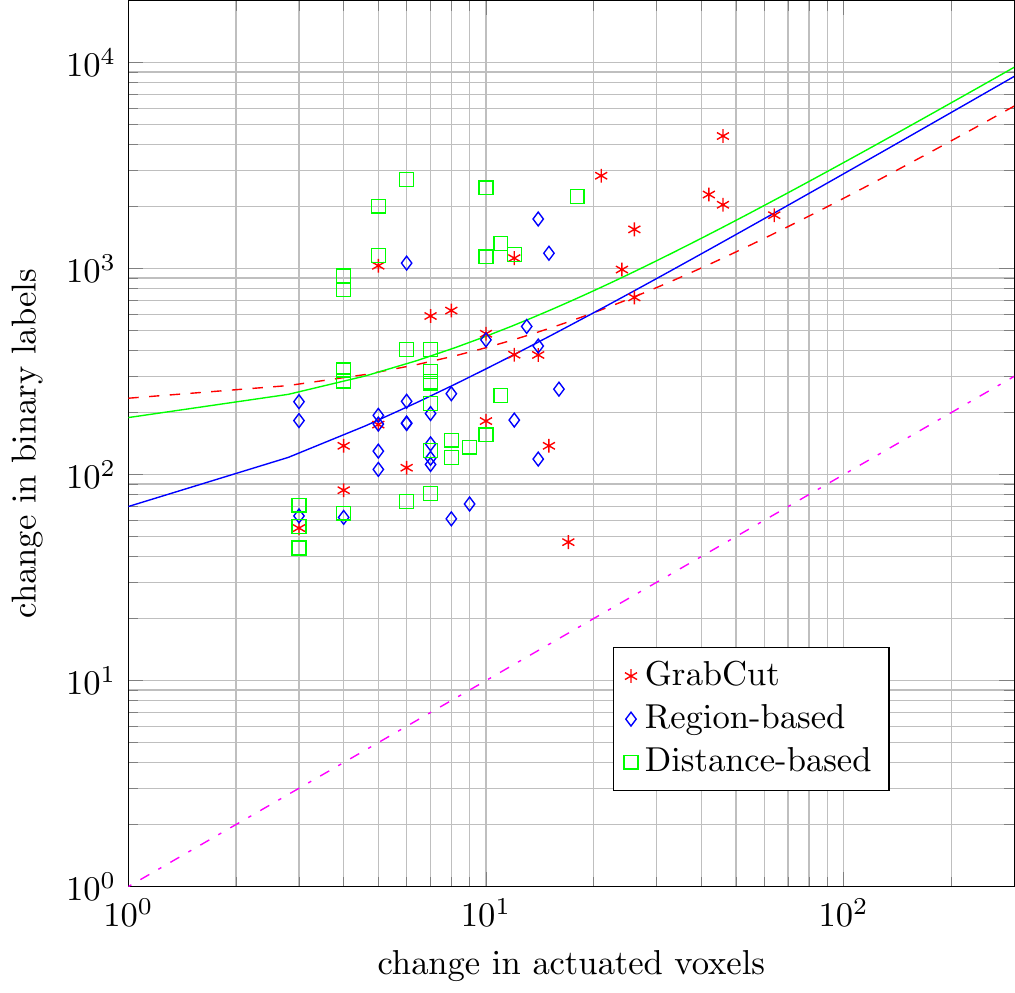}}
\subfigure[]{\label{fig:predBird}\includegraphics[width = 2.0in]{./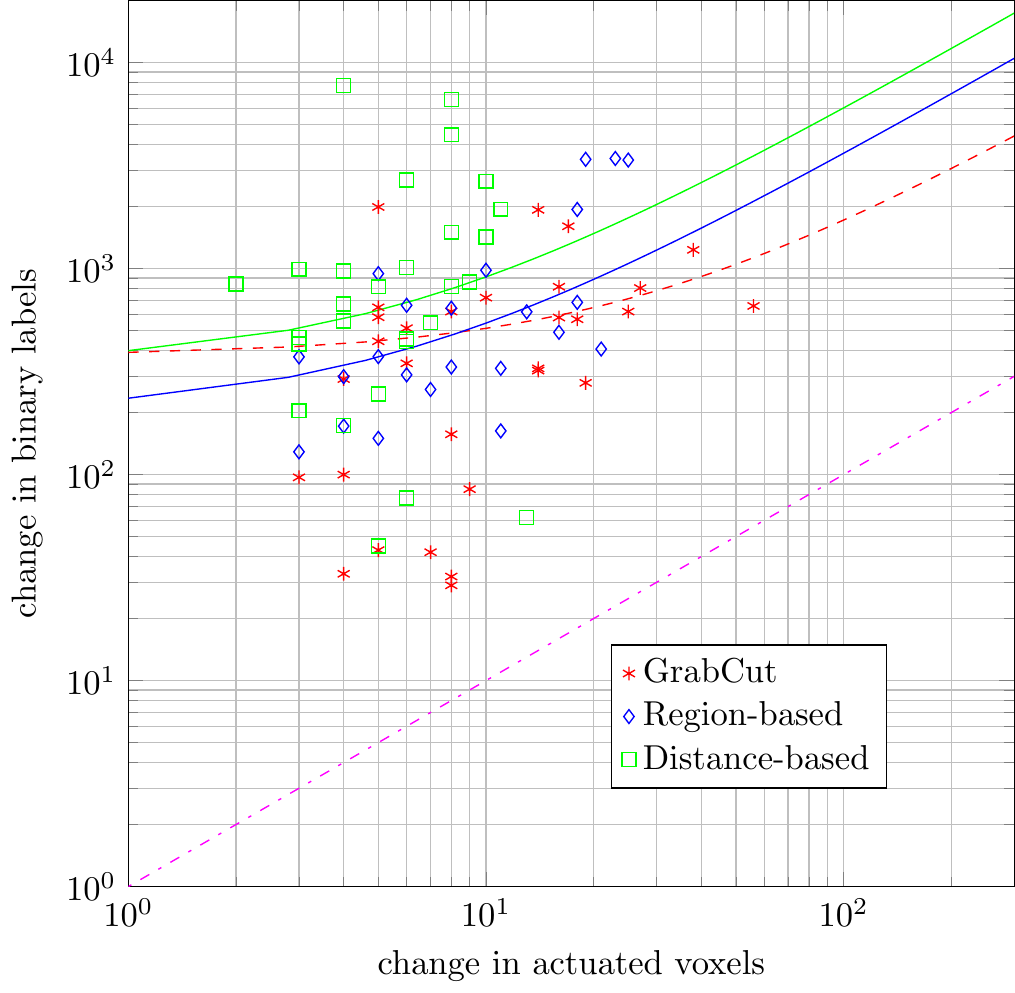}} \\ \vspace{-0.3cm}
\subfigure[]{\includegraphics[width = 2.0in]{./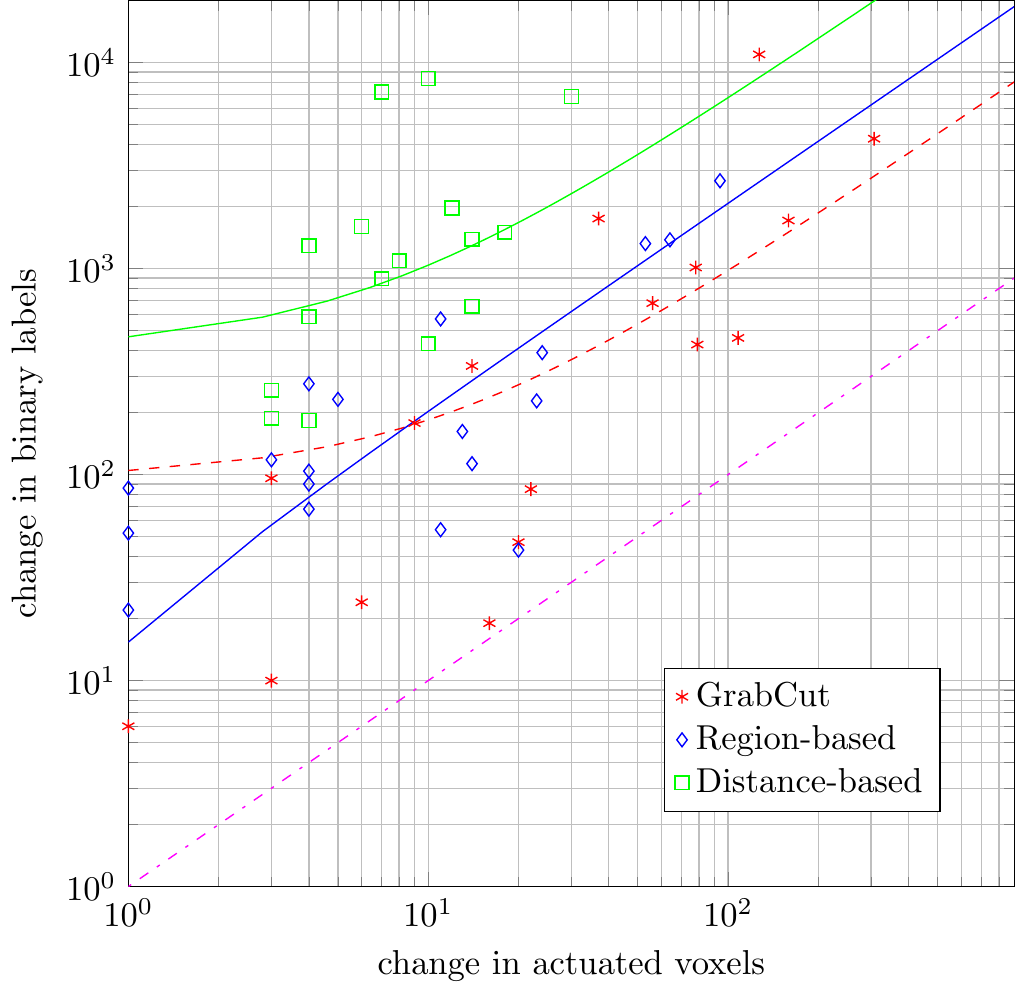}}
\subfigure[]{\includegraphics[width = 2.0in]{./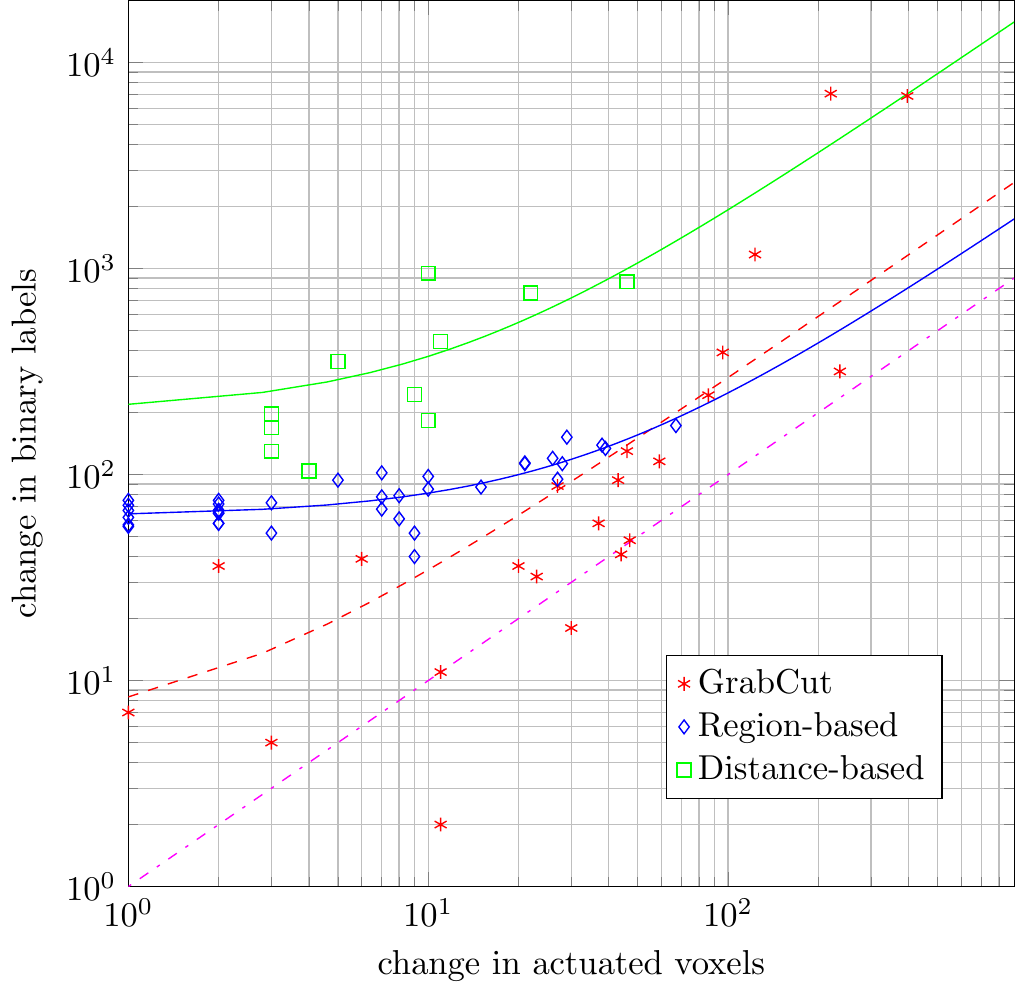}}\vspace{-0.3cm}
\caption{Comparison of dynamic response to user input; data
points and linear fit lines for (a) surfing, (b) bird, (c) epiphysis, and (d) physis images. Points below the dashed pink lines indicate wasted user effort since more additional voxels were actuated than reclassified.}
\label{fig:comparePred}
\end{figure*}

These comparisons do not mean that the proposed algorithm will have better performance than other segmentation algorithms in all cases. It is emphasized that, by forming a closed-loop interactive segmentation framework, these classical algorithms can be revitalized and overcome their disadvantages when used as a single segmentation method in an open loop.

\subsection{Application to Medical Image Segmentation}
The proposed method was tested on real CT volumes. In one experiment, four structures, left/right eye ball, brain stem, and mandible, involving in head-neck radiotherapy contouring were segmented from a CT image. The image size is $512 \times 512 \times 146$ voxels. Due to the high similarity of target structures to surrounding tissues and the required precision for the final segmentation result, the proposed region-based method was chosen in this test based on its robustness to these factors. To show the difference of intra-user performance, three users, who were blind to a reference manual segmentation, were involved in this test. The details of this experiment are  summarized in Table \ref{tab:timingHeadNeck}. The difference is obvious when examining the times required to perform the segmentations. The speedup is roughly $2\times$ according to Table~\ref{tab:timingHeadNeck}. It is also important to notice that the increase in speed becomes more noticeable for large structures that have intricate shapes. The users spent significantly more time outlining the mandible because of its complex boundaries. Additionally, less concentration from the user is required when guided by the interactive method, which further reduces the segmentation time. An example of segmentation is shown in Figure \ref{fig:resultRadThero}. The largest error is at the mandible because of the scanning artifacts. However, the difference in segmentation accuracy is mainly due to the users understanding of these anatomical structures.
\begin{table*}[!hbt]
\footnotesize
\centering
\caption{Quantitative Comparison: Manual v.s. Interactive Approach for Head-Neck Image Segmentation}
\begin{tabular}{|c|c|c|c|c|c|}
\hline
 & \multirow{2}{*}{Manu. Segmentation} & \multirow{2}{*}{Interact. Segmentation} & \multicolumn{3}{c|}{Dice} \\
 \cline{4-6}
 & & &User 1 &User 2 &User 3\\ \hline
Left Eye Ball &3 min 35 sec & 2 min  & 0.85 & 0.88 & 0.84 \\ \hline
Right Eye Ball &3 min 25 sec  & 1 min 30 sec   & 0.87 & 0.94 & 0.87 \\ \hline
Brain Stem &9 min 2 sec & 5 min 30 sec & 0.86 & 0.85 & 0.80  \\ \hline
Mandible &29 min 37 sec & 10 min 15 sec & 0.81 & 0.90 & 0.86  \\
\hline
\end{tabular}
\label{tab:timingHeadNeck}
\end{table*}
\begin{figure*}[!ht]
\centering
\begin{tabular}{c}
\subfigure{\includegraphics[height = 1.1in]{./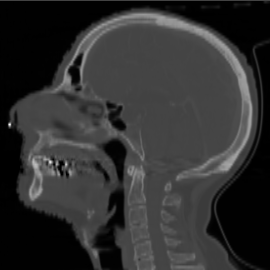}}
\subfigure{\includegraphics[height = 1.1in]{./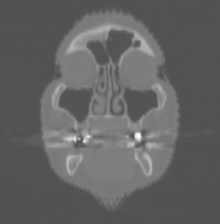}}
\subfigure{\includegraphics[height = 1.1in]{./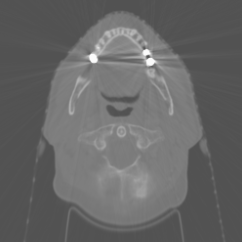}}\\ \vspace{-0.1cm}
\subfigure{\includegraphics[height = 1.1in]{./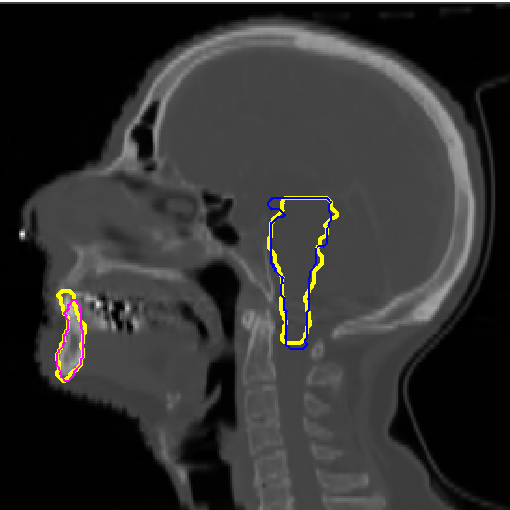}}
\subfigure{\includegraphics[height = 1.1in]{./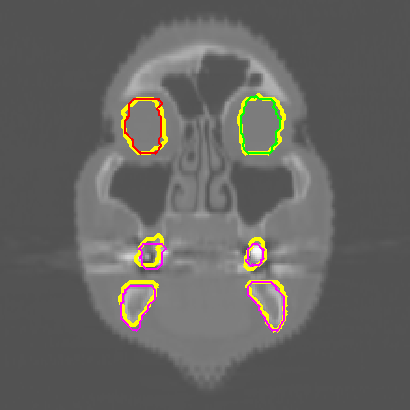}}
\subfigure{\includegraphics[height = 1.1in]{./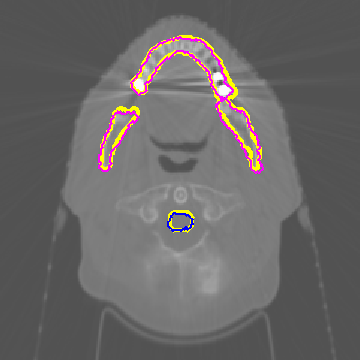}}
\end{tabular}
\begin{tabular}{c}
\subfigure{\includegraphics[height = 1.6in]{./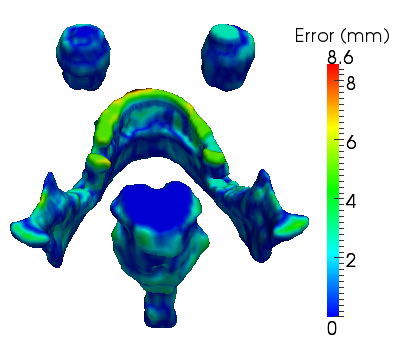}}
\end{tabular}
\caption{Example of using the region-based method to segment the left eye (red), right eye (green), brain stem (blue), and mandible (pink), superimposed over manual segmentations (yellow), in axial,
coronal, and sagittal views, respectively. The distribution of errors is shown on the right.}
\label{fig:resultRadThero}
\end{figure*}

The proposed distance-based method was tested on cardiac chambers segmentation. Three commonly studied chambers in cardiac diseases diagnosis, \textit{i.e.}, left/right ventricles and left atrium, were segmented from three cardiac CT images. The size of image slice along the axial direction is all $512 \times 512$ with the number of slices ranging from 243 to 292. The performance was summarized in Table \ref{tab:timingCardiac}. The proposed method has over $9\times$ speedups on average while maintains close to $0.90$ dice coefficient. The segmentation of the right ventricle is harder than the other two due the lack of contrast along the endocardium and the invisibility of the valves between the right ventricle and atrium.  As can be seen from one example segmentation in Figure \ref{fig:resultCardiac}, large errors are mostly at where chamber boundaries become ambiguous.

\begin{table*}[!ht]
\footnotesize
\centering
\caption{Quantitative Comparison: Manual v.s. Interactive Approach for Cardiac Image Segmentation}
\begin{tabular}{|c|c|c|c|c|c|}
\hline
 & \multirow{2}{*}{Manu. Segmentation} & \multirow{2}{*}{Interact. Segmentation} & \multicolumn{3}{c|}{Dice} \\
 \cline{4-6}
 & & &Left Ventricle &Right Ventricle &Left Atrium \\ \hline
 Case 1 &157 min 46 sec & 18 min 46 sec & 0.90 & 0.90 & 0.89 \\ \hline
 Case 2 &131 min 26 sec & 13 min 45 sec & 0.90 & 0.87 & 0.92 \\ \hline
 Case 3 &88 min 29 sec & 13 min 35 sec & 0.91 & 0.84 & 0.93  \\
\hline
\end{tabular}
\label{tab:timingCardiac}
\end{table*}
\begin{figure*}[!ht]
\centering
\begin{tabular}{c}
\subfigure{\includegraphics[height = 1.0in]{./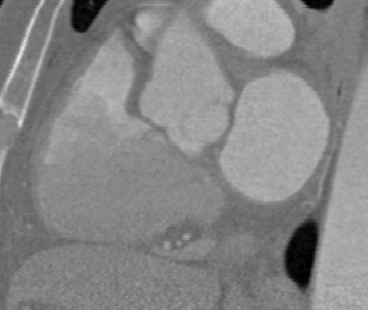}}
\subfigure{\includegraphics[height = 1.0in]{./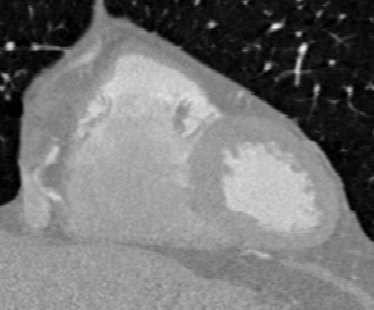}}
\subfigure{\includegraphics[height = 1.0in]{./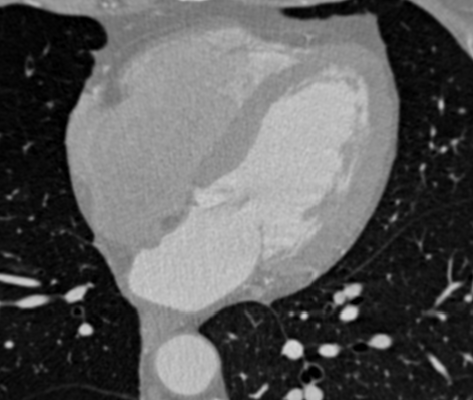}}\\ \vspace{-0.1cm}
\subfigure{\includegraphics[height = 1.0in]{./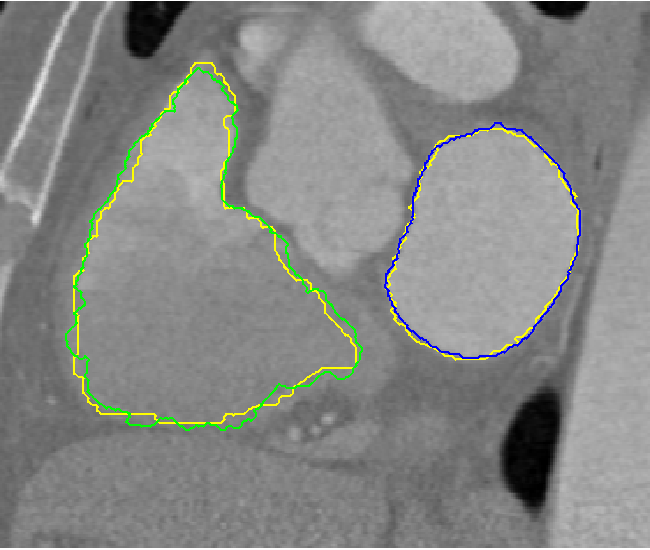}}
\subfigure{\includegraphics[height = 1.0in]{./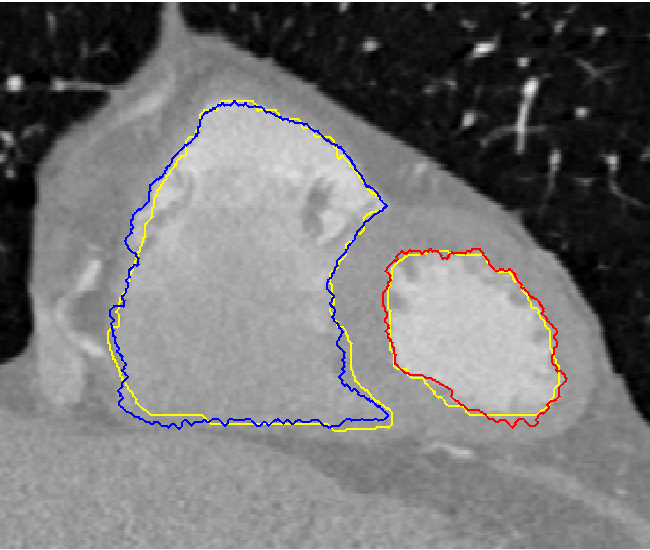}}
\subfigure{\includegraphics[height = 1.0in]{./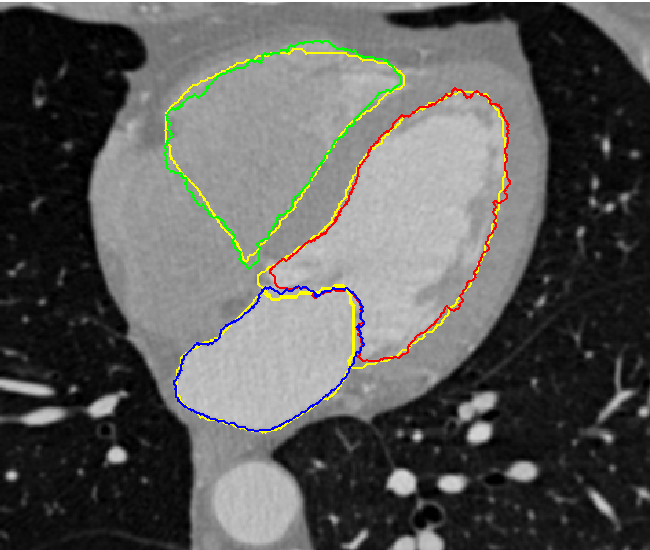}}
\end{tabular}
\begin{tabular}{c}
\subfigure{\includegraphics[height = 1.8in]{./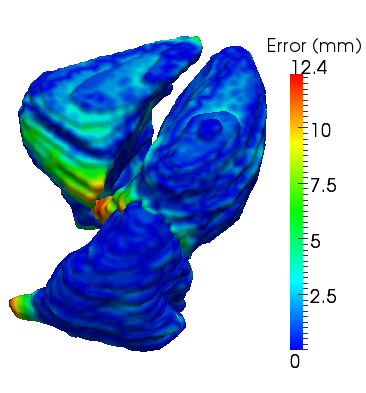}}
\end{tabular}
\caption{Example of using the distance-based method to segment the left ventricle (red), right ventricle (green), and left atrium (blue), superimposed over manual segmentations (yellow), in axial,
coronal, and sagittal views, respectively. The distribution of errors is shown on the right.}
\label{fig:resultCardiac}
\end{figure*}

Note that all the tests were conducted on desktops with regular CPUs. Indeed the proposed method has real-time performance for real 3D medical images, taking into account the factor that the total segmentation time is primarily how long the user takes to evaluate the current segmentation and apply more corrective input.

\subsection{Relation to Existing Interactive Algorithms}
The proposed control framework is reflected implicitly in numerous of existing algorithms. There are some algorithms that directly fit into the proposed framework. For example, the formulations in \cite{Interactive_ProbLevelSet_SPIE, InteractiveLS_ISBI} can be rewritten as
\begin{equation}
\dfrac{\partial \phi}{\partial t} = \delta(\phi)(G(\phi) + \lambda L(\xi)),
\end{equation}
where $L(\cdot)$ is a function modeling user clicks and $\lambda$ is a \textit{global} constant scalar that balances the user's influence to the segmentation. The value of $\lambda$ is usually determined empirically, rather than automatically adjusted to image content as in the proposed framework. As an example, these two methods were applied to segment the epiphysis and physis images with different $\lambda$'s, each with three experiments. As shown in Figure \ref{fig:avgUserEfforts}, many more user inputs are required if $\lambda$ is small, while large $\lambda$ can also increase user's input as it has a similar effect of \textit{excessive input} (see \cite{KSlice_TMI}). Note that a large input was required in segmenting the physis using the method \cite{InteractiveLS_ISBI}, because user input is only applied after an automatic segmentation is finished; thus it can not prevent large errors from occurring. While it is by no means a definitive comparison, the different characteristics of these algorithms were observed by looking at the  minimum average efforts used in segmenting these structures (152, 196, and 120 pixels in epipysis and 169, 404, 140 in physis, respectively, for \cite{Interactive_ProbLevelSet_SPIE,InteractiveLS_ISBI} and the proposed method).
\begin{figure}[!htb]
\begin{center}
	\includegraphics[width = 3in]{./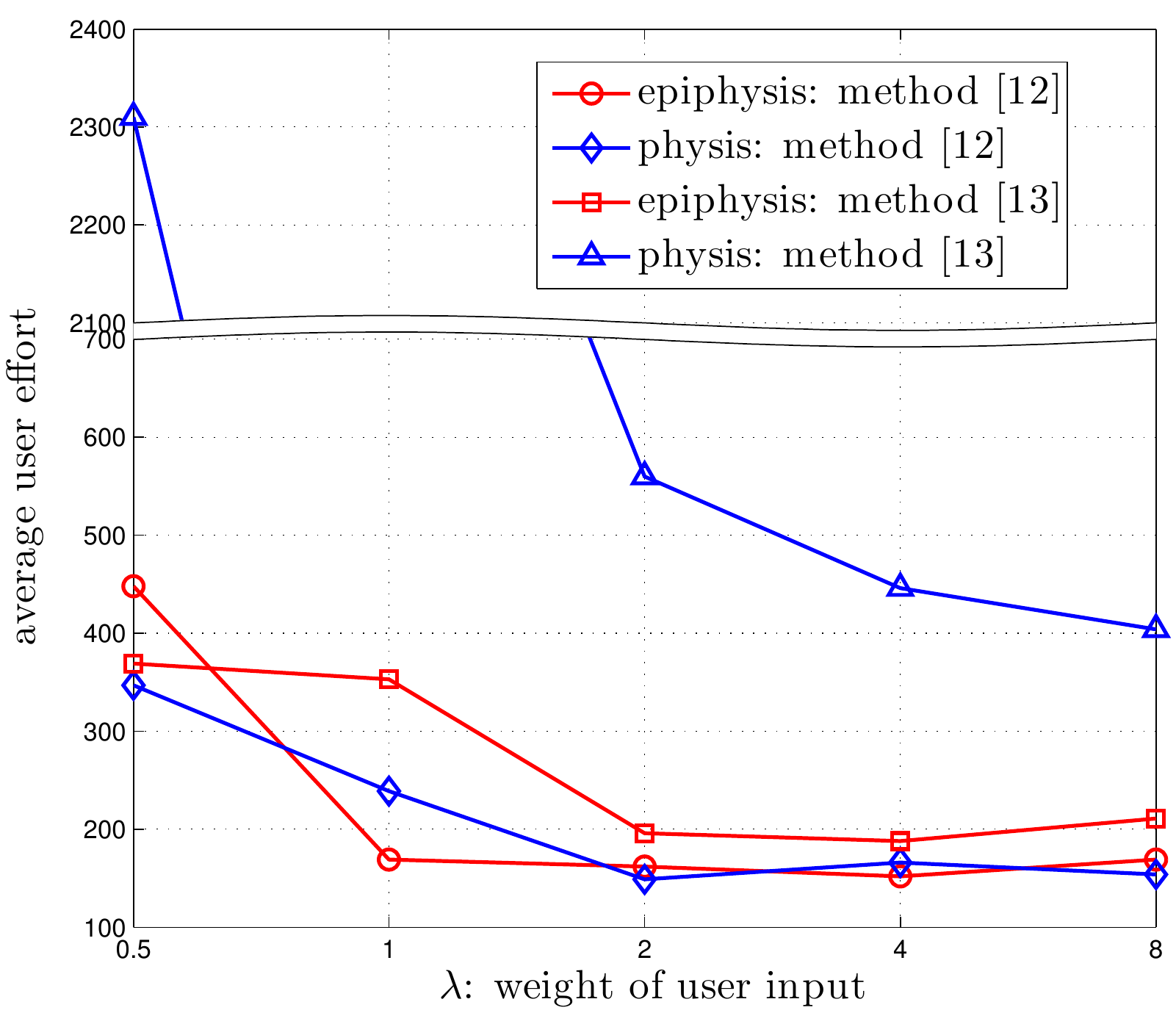}%
\end{center}\vspace{-0.3cm}
\caption{Average user's effort of methods [12] and [13] with varying $\lambda$ for the segmentation of epiphysis and physis.}
\label{fig:avgUserEfforts}
\end{figure}

Another feature of the control-based framework is the system's robustness to ``noisy'' user inputs; see Figure \ref{fig:abruptInput} for example. This is a property inherited from the feedback control design principle that allows admissible input variations \cite{Feedback_Textbook}.
\begin{figure}[!htb]
\centering
\subfigure[]{\label{fig:predSurf}\includegraphics[width = 1.6in]{./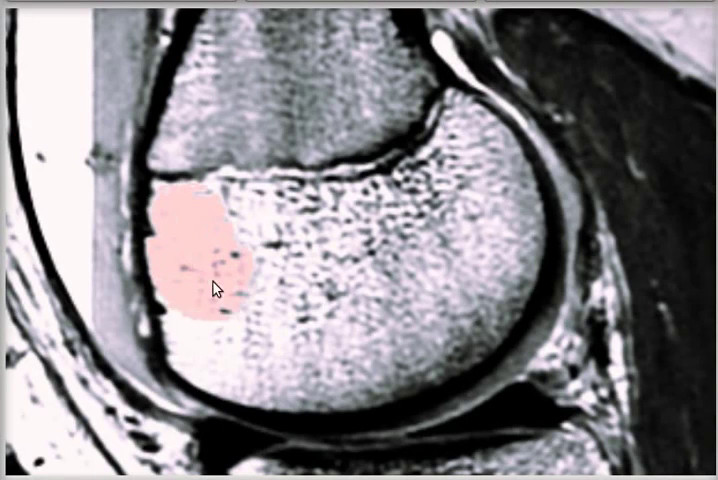}}
\subfigure[]{\label{fig:predSurf}\includegraphics[width = 1.6in]{./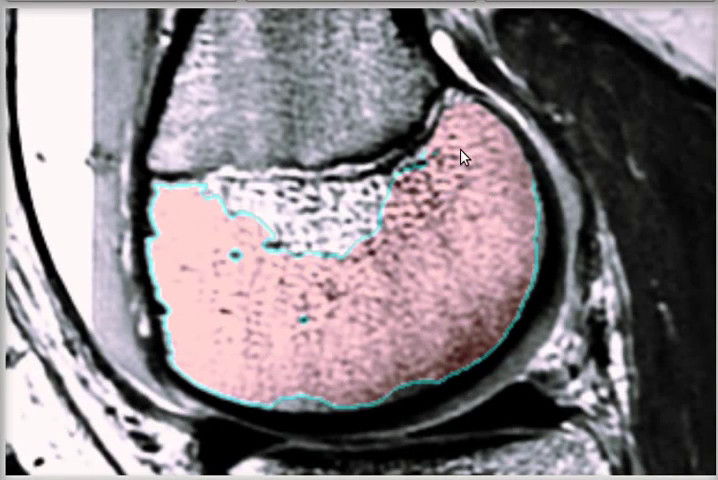}}
\subfigure[]{\label{fig:predSurf}\includegraphics[width = 1.6in]{./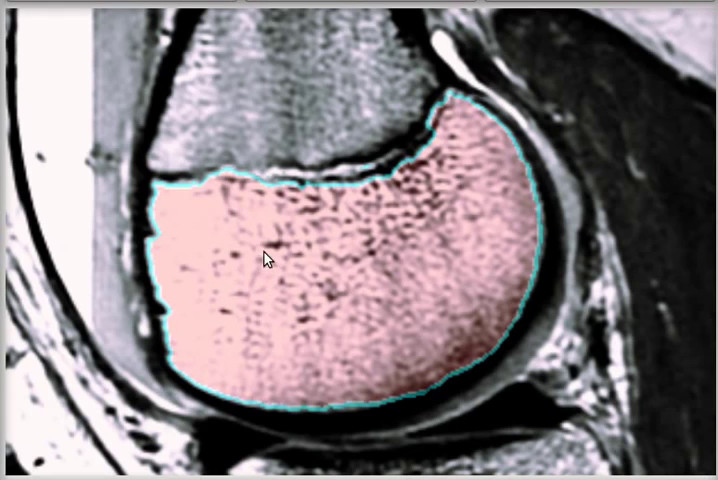}}\\ \vspace{-0.3cm}
\subfigure[]{\label{fig:predSurf}\includegraphics[width = 1.6in]{./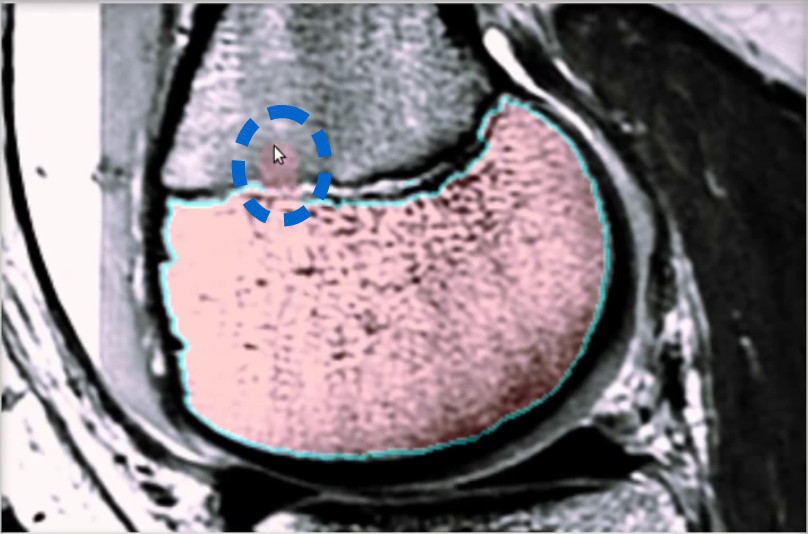}}
\subfigure[]{\label{fig:predSurf}\includegraphics[width = 1.6in]{./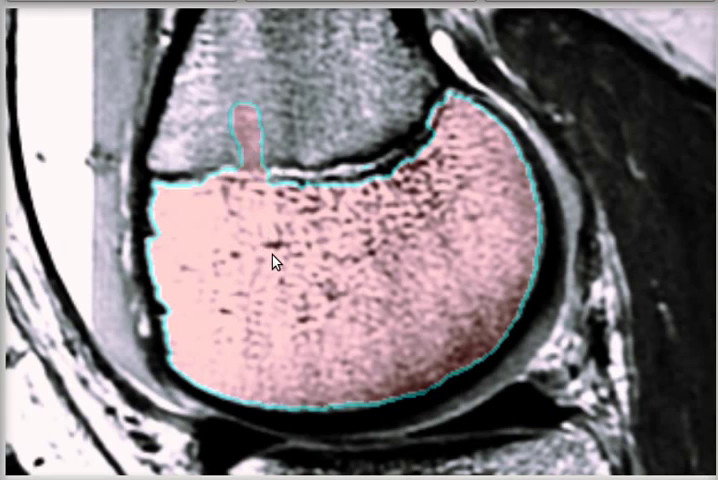}}
\subfigure[]{\label{fig:predSurf}\includegraphics[width = 1.6in]{./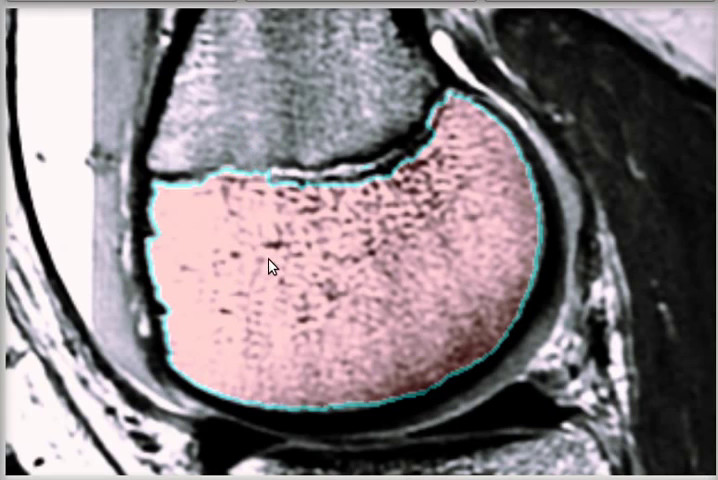}}\vspace{-0.3cm}
\caption{Robustness of the control-based system with respect to ``noisy'' user input. (a)$\sim$ (c) user input for the right segmentation. (d) $\sim$ (e) the system response to ``noisy''  user input (inside blue region). }
\label{fig:abruptInput}
\end{figure}

For some distance-based interactive segmentation algorithms such as \cite{GeodesicSeg_IJCV}, the type of impulsive input described in Equation \eqref{equ:impulseCoupled} was implicitly used to model user interactions.
There are other algorithms that can be formulated into the proposed framework. For instance, the algorithm presented in \cite{GrowCut} can be reformulated as a minimal path problem thereby naturally fitting into the distance-based clustering case.

\section{Conclusion}
\label{sec:Conclusion}
This paper has presented a systematical way of applying control theory to analyze and design an interactive image segmentation algorithm. As an extension of \cite{KSlice_TMI}, the new formulation has wider applications and stronger stability conditions. In particular, the proposed method supports both region- and distance-based metrics, handles multiple-object segmentation, and works for both scalar and vector images. The concept of impulsive control was adopted to model user interactions, which justifies the rationale of widely used empirical strategy from the perspective of feedback control. In addition, conditions for asymptotic and exponential stability are derived, covering different levels of stability requirement.

The experimental results show the effectiveness of adding the proposed control structure to two representative classical methods. Since user's role is seamlessly integrated into a feedback control system, large error can be prevented from developing and user's effort to corrections is guided by the stability condition. Though the examples used in this paper are based on level-sets formulation, the design principle  is generalizable to other interactive segmentation systems that can be described by dynamical systems. It is extensible to discrete systems as well. The focus of this paper is on adopting control theory into image segmentation. There are other factors that determine the performance of an interactive segmentation system. One important topic is informative and efficient visualization to allow effective user interaction \cite{V3D_Vsiaulation}. Simple user scribbles or predefined regions were used in proposed example algorithms. It becomes vital to have more effective user interactions for large scale 3D image volumes.

\section*{Acknowledgment}
This research was supported by the National Center for Research Resources under Grant P41-RR-013218, the National Institute of Biomedical Imaging and Bioengineering under Grant P41-EB-015902 of the National Institutes of Health through the Neuroanalysis Center of Brigham and Women's Hospital, National Institutes of Health 1U24CA18092401A1, and AFOSR grants FA9550-12-1-0319 and FA9550-15-1-0045.

\bibliographystyle{unsrt}
\bibliography{InteractiveSegmentation}
\vspace{-1cm}

\section*{Appendix}

\section*{I: Proof Of Theorem IV.1}
\label{sec:AppdixProof1}
We begin with the following lemma:
\begin{lemma}
\label{lemma:G}
For the image bound defined in Equation \eqref{equ:GM}, there exists at least one label, $i\in \{1,\cdots,N\}$,  such that $G_i(\bm{x},t) < g_M(\bm{x})$ for a point on the zero level set.
\end{lemma}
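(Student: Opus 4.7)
The plan is to exploit the competitive decomposition in Equation \eqref{equ:G},
\begin{equation*}
G_i(\bm{x},t) = g_i^c(\bm{x},t) - g_i(\bm{x},t),
\end{equation*}
together with the observation that in both the region-based and distance-based constructions of Sections \ref{sec:region_auto} and \ref{sec:dist_auto}, $g_i^c$ is built as the (pointwise) minimum of the $g_j$'s with $j\neq i$. Fix any point $\bm{x}$ sitting on a zero level set and any admissible time $t$. Because $g_M(\bm{x})$ in \eqref{equ:GM} is a supremum of $|G_i|$ taken over all labels and all times, the one-sided bound $G_i(\bm{x},t)\leq g_M(\bm{x})$ already holds for every $i$; the only real task is to exhibit one particular label for which this inequality is strict.

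First I would select
\begin{equation*}
i^{\ast}\in\argmax_{i\in\{1,\dots,N\}} g_i(\bm{x},t).
\end{equation*}
Since $g_{i^{\ast}}^c(\bm{x},t)=\min_{j\neq i^{\ast}} g_j(\bm{x},t)$ and each $g_j(\bm{x},t)\leq g_{i^{\ast}}(\bm{x},t)$ by the choice of $i^{\ast}$, I obtain $g_{i^{\ast}}^c\leq g_{i^{\ast}}$, hence $G_{i^{\ast}}(\bm{x},t)\leq 0$. Combined with the trivial nonnegativity $g_M(\bm{x})\geq 0$, this gives $G_{i^{\ast}}(\bm{x},t)\leq 0\leq g_M(\bm{x})$, and the inequality is strict as soon as $g_M(\bm{x})>0$, which is the generic situation at a point where the intrinsic dynamics has not yet reached a steady state.

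The main obstacle I anticipate is the degenerate regime $g_M(\bm{x})=0$: there, every $G_i(\bm{x},t)$ vanishes and the strict bound collapses to an equality. I would dispose of this either by restricting the lemma to points where the dynamics is non-trivial (there is nothing to stabilize otherwise), or by inflating $g_M$ by an arbitrarily small positive $\varepsilon$, which is harmless to the downstream stabilization argument of Theorem \ref{theorem:continuous}. A secondary subtlety is that the definitions \eqref{equ:GCV} and \eqref{equ:GDistTerm} embed a factor $\delta(\phi_i)$ that zeroes out $g_i$ away from the zero level set of $\phi_i$; this does not disturb the $\argmax$ step, because the selection is done pointwise at the single $\bm{x}$, and whichever label dominates $g_i(\bm{x},t)$ there plays the role of $i^{\ast}$.
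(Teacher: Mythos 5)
Your argument is correct in substance, but it takes a genuinely different route from the paper's. The paper proves the lemma by contradiction from the geometry of the partition: if $G_i(\bm{x},t) \geq g_M(\bm{x})$ for every $i$, then since $g_M$ is the supremum of the $|G_i|$ one must have $G_i = g_M$ for all $i$, so all fronts at that point move identically and (when $g_M > 0$) the zero level sets of distinct regions would ``bleed'' into one another, violating the no-overlap condition on the $\Omega_i$. You instead construct the witnessing label explicitly, choosing $i^{\ast} = \argmax_i g_i(\bm{x},t)$ and using the fact that $g_{i^{\ast}}^c$ is the pointwise minimum of the remaining $g_j$ to conclude $G_{i^{\ast}} = g_{i^{\ast}}^c - g_{i^{\ast}} \leq 0 \leq g_M$. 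Your version is more concrete and avoids the somewhat informal ``bleeding'' step, but it is also less general: it leans on the specific min-structure of $g_i^c$ present in the region-based and distance-based instantiations of Sections \ref{sec:region_auto} and \ref{sec:dist_auto}, whereas the lemma is stated for the abstract decomposition \eqref{equ:G}, where $g_i^c$ is only described as ``the contribution from all other regions''; the paper's contradiction argument relies solely on the no-overlap (and implicitly the full-cover) property of the partition and so applies to any admissible choice of $g_i, g_i^c$. Finally, the degenerate case $g_M(\bm{x}) = 0$, where the strict inequality collapses, afflicts both proofs equally --- the paper silently asserts the dynamics are ``nonzero'' at that point, while you flag the issue explicitly and propose reasonable workarounds, which is to your credit.
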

\begin{proof}
This follows from the (no overlap) condition that $\Omega_i(\bm{x},t)\cap \Omega_j(\bm{x},t)=\emptyset$. Indeed, suppose to the contrary, we have $G_i(\bm{x},t) \geq g_M(\bm{x}), \forall i = 1,\cdots, N$. This only holds when $G_i(\bm{x},t) = g_M(\bm{x}), \forall i = 1,\cdots, N$. Therefore, all the natural dynamics $G_i(\bm{x},t)$ are equal and nonzero at a given point, and so the zero level sets from different regions will ``bleed'' through one another. This contradicts the condition of no overlap between any two regions.
\end{proof}

The proof of the Theorem \ref{theorem:continuous} is as follows.
\begin{proof}
It is straightforward to check that $dV(\xi,t)/dt=0$ if $\xi =0$.

Otherwise, note that $\xi \rightarrow \{-1, 0, 1\}$ as $\epsilon \rightarrow 0$, we first look at the case $\xi \in \{-1, 0, 1\}$. Take $V(\xi, t)$ as the candidate Lyapunov function and differentiate it with respect $t$,
\begin{equation}
\label{equ:theoremC}
\begin{split}
\dfrac{dV(\xi,t)}{dt}
=&\sum_{i=1}^{N}\int_{\Omega}{\xi_i\dfrac{\partial\xi_i}{\partial t}}d\bm{x}\\
=&\sum_{i=1}^{N}\int_{\Omega}{\xi_i\delta(\phi_i)\dfrac{\partial \phi_i}{\partial t}}d\bm{x}\\
=&\sum_{i=1}^{N}\int_{\Omega}{ \delta^2(\phi_i) [G_i \xi_i -\alpha^2_i\xi^2_i]}d\bm{x}\\
\leq&\sum_{i=1}^{N}\int_{\Omega}{ \delta^2(\phi_i) [|G_i| |\xi_i| -\alpha^2_i\xi^2_i]}d\bm{x}\\
=&\sum_{i=1}^{N}\int_{\Omega}{ \delta^2(\phi_i)\xi^2_i [|G_i| -\alpha^2_i]}d\bm{x}\\
<&\sum_{i=1}^{N}\int_{\Omega}{ \delta^2(\phi_i)\xi^2_i [g_M -\alpha^2_i]}d\bm{x}\\
\leq& \quad 0
\end{split},
\end{equation}

The the last inequality holds because of the Lemma~\ref{lemma:G}.

As $\xi_i$ is a \textit{continuous} function of $\epsilon$, then the inequality \eqref{equ:theoremC} holds when $\epsilon$ is sufficiently small.
Therefore, $dV(\xi,t)/dt$ is negative definite under the given condition.

Furthermore, if the condition \eqref{equ:scaleG} is satisfied, by applying the mean value theorem to \eqref{equ:theoremC}, we have
\begin{equation}
\begin{split}
\dfrac{dV(\xi,t)}{dt} \leq&\sum_{i=1}^{N}\int_{\Omega}{ \delta^2(\phi_i)\xi^2_i [|G_i| -\alpha^2_i]}d\bm{x}\\
=&-\dfrac{\tilde{\nu}}{2} \sum_{i=1}^{N}\int_{\Omega}{\delta^2(\phi_i) \xi^2_i}d\bm{x}\\
\leq & -\dfrac{\tilde{\nu}}{\rho} V(\xi, t)
\end{split},
\end{equation}
where $\tilde{\nu} \in \big(0, max_{\{\bm{x},i\}}\big\{2(\alpha^2_i - |G_i|)\big\}$. Note that $V(\xi, t)$ is in fact defined as $\dfrac{1}{2}\Vert\xi\Vert _2^2$. Therefore, the system is exponentially stable with the convergence rate of $\nu=\tilde{\nu}/\rho$ under the given condition.
\end{proof}

\section*{II: Proof Of Theorem IV.2}
\label{sec:AppdixProofTh2}
\begin{proof}
Computing the time derivative $V'(t) = E'(t) + \hat{V}'(t)$,
\begin{equation}
\begin{split}
E'(t) &= \sum_{i=1}^{N}\int_{\Omega}{|U_i|e_{U_i}\dfrac{\partial e_{U_i}}{\partial t}d\bm{x}}\\
&= \sum_{i=1}^{N}\int_{\Omega}{|U_i|e_{U_i}\big(\delta(\hat{\phi}^*_i)\dfrac{\partial \hat{\phi}^*_i}{\partial t}\big)d\bm{x}}
\end{split}
\end{equation}
\begin{equation}
\begin{split}
\hat{V}'(t) &=\sum_{i=1}^{N}\int_{\Omega}{\hat{\xi}_i \dfrac{\partial \hat{\xi}_i}{\partial t} d\bm{x}}\\
&=\sum_{i=1}^{N}\int_{\Omega}{\hat{\xi}_i \big[\delta(\phi_i)\dfrac{\partial\phi_i}{\partial t} -\delta(\hat{\phi}^*_i)\dfrac{\partial\hat{\phi}^*_i}{\partial t}\big]d\bm{x}}.
\end{split}
\end{equation}
Substituting for $\dfrac{\partial\phi_i}{\partial t}$ and $\dfrac{\partial\hat{\phi}^*_i}{\partial t}$,
\begin{equation}
E'(t) =\sum_{i=1}^{N}\int_{\Omega}{\delta^2(\hat{\phi}^*_i)[-|U_i|^2 e^2_{U_i} + |U_i|e_{U_i}\hat{\xi}_i]d\bm{x}}
\label{equ:E'}
\end{equation}
\begin{equation}
\begin{split}
\hat{V}'(t) =&\sum_{i=1}^{N}\int_{\Omega}{\delta^2(\phi_i) [G_i \hat{\xi}_i -\alpha^2_i\hat{\xi}^2_i]}d\bm{x}\\
&-\sum_{i=1}^{N}\int_{\Omega}{\delta^2(\hat{\phi}^*_i) [\hat{\xi}^2_i -|U_i|e_{U_i}\hat{\xi}_i]}d\bm{x}
\end{split}
\label{equ:VHat'}
\end{equation}
Adding Equations \eqref{equ:E'} and \eqref{equ:VHat'} and combining the $\delta^2(\hat{\phi}^*_i)$ terms,
\begin{equation}
\begin{split}
E'(t) + \hat{V}'(t) = &\sum_{i=1}^{N}\int_{\Omega}{\delta^2(\phi_i) [G_i \hat{\xi}_i -\alpha^2_i\hat{\xi}^2_i]}d\bm{x}\\
&-\sum_{i=1}^{N}\int_{\Omega}{\delta^2(\hat{\phi}^*_i) [\hat{\xi}_i -|U_i|e_{U_i}]^2}d\bm{x}
\end{split}
\end{equation}
When $\alpha_i$ satisfies Theorem \ref{theorem:continuous}, it follows that
\begin{equation}
V'(t) \leq -\sum_{i=1}^{N}\int_{\Omega}{\delta^2(\hat{\phi}^*_i)\big[\hat{\xi}_i - |U_i|e_{U_i}\big]^2}d\bm{x}
\end{equation}
\end{proof}

\end{document}